\documentclass[twoside,11pt]{article}

\usepackage{blindtext}

\usepackage{jmlr2e}

\usepackage{lastpage}

\usepackage{microtype}
\usepackage{graphicx}
\usepackage{subfigure}
\usepackage{booktabs} 
\usepackage{bbold}
\usepackage{longtable}
\usepackage{supertabular}
\usepackage[font={small,it}]{caption}

\usepackage{hyperref}

\usepackage{geometry}
\usepackage{amsmath}
\usepackage{amssymb}
\usepackage{mathtools}
\usepackage{amsfonts}
\usepackage{tikz}
\usetikzlibrary{decorations.pathreplacing}

\usepackage[ruled,vlined]{algorithm2e}

\newcommand{\bbw}{\Bar{\Bar{w}}}
\newcommand{\bw}{\Bar{w}}
\newcommand{\Obj}{\mathcal{O}}

\newcommand{\Tsel}{T_{\text{sel}}}

\newcommand{\abs}[1]{\lvert #1 \rvert}
\newcommand{\Abs}[1]{\Big\lvert #1 \Big\rvert}
\newcommand{\AAbs}[1]{\left\lvert #1 \right\rvert}
\newcommand{\norm}[1]{\lVert #1 \rVert}
\newcommand{\Norm}[1]{\left\lVert #1 \right\rVert}

\newcommand{\brac}[1]{\langle #1 \rangle}
\newcommand{\Brac}[1]{\Big\langle #1 \Big\rangle}

\newcommand{\J}{\Bar{J}_n}
\newcommand{\Tr}{T_{\text{renorm}}}
\newcommand{\Tl}{T_{\text{learn}}}

\newcommand{\disc}{\text{disc}_M}
\newcommand{\Disc}{\text{Disc}_M}
\newcommand{\FDisc}{\Phi_{\text{disc}}}

\newcommand{\ie}{{\em i.e.,~}}
\newcommand{\R}{\mathbb{R}}

\newcommand{\E}{\mathbb{E}}

\newcommand{\h}{\mathcal{H}}
\newcommand{\N}{\mathbb{N}}

\DeclareMathOperator{\softmax}{softmax}
\usepackage[capitalize,noabbrev]{cleveref}

\newtheorem{assumption}[theorem]{Assumption}

\allowdisplaybreaks

\ShortHeadings{Learning in chemical reaction networks}{Sophie Jaffard and Ivo F.~Sbalzarini}
\firstpageno{1}

\begin{document}

\title{Chemical Reaction Networks Learn Better than Spiking Neural Networks}

\author{\name Sophie Jaffard \email jaffard@mpi-cbg.de \\
  \addr
  Max Planck Institute of Molecular Cell Biology and Genetics \\
  Center for Systems Biology Dresden \\
  Dresden, Germany
  \AND
  \name Ivo F.~Sbalzarini \email sbalzarini@mpi-cbg.de \\
  \addr Dresden University of Technology, Faculty of Computer Science\\
  Max Planck Institute of Molecular Cell Biology and Genetics \\
  Center for Systems Biology Dresden \\
Dresden, Germany}

\editor{My editor}

\maketitle

\begin{abstract}
  We mathematically prove that chemical reaction networks without hidden layers can solve tasks for which spiking neural networks require hidden layers. Our proof uses the deterministic mass-action kinetics formulation of chemical reaction networks. Specifically, we prove that a certain reaction network without hidden layers can learn a classification task previously proved to be achievable by a spiking neural network with hidden layers. We provide analytical regret bounds for the global behavior of the network and analyze its asymptotic behavior and Vapnik--Chervonenkis dimension. In a numerical experiment, we confirm the learning capacity of the proposed chemical reaction network for classifying handwritten digits in pixel images, and we show that it solves the task more accurately and efficiently than a spiking neural network with hidden layers. This provides a motivation for machine learning in chemical computers and a mathematical explanation for how biological cells might exhibit more efficient learning behavior within biochemical reaction networks than neuronal networks.
\end{abstract}

\begin{keywords}
  Chemical Reaction Networks, Spiking Neural Networks, Supervised Learning, Classification, Mass-Action Kinetics, Statistical Learning Theory, Regret Bounds, Model Complexity
\end{keywords}

\section{Introduction}

Living cells process information through biochemical reaction networks~\citep{Bray1995}. This has been verified experimentally both in gene regulation networks, where different network topologies give rise to different logical functions~\citep{Guet2002}, and in cell signaling networks, where the temporal dynamics of signaling proteins encode information that cells decode into distinct fate decisions~\citep{Purvis2013}. A natural question is whether biochemical networks are capable of not only processing information but also \emph{learning} from past inputs. In biology, the concept of learning has so far mostly been reserved for electrical signals between neurons, mainly in the brain. However, it could be much more widespread, with every cell performing its own learning using chemically encoded signals. It is then interesting to compare learning in neuronal networks with learning occurring within cells through biochemical reaction networks.

The expectation that individual cells can learn via chemical signals is well-founded from a theoretical standpoint. Chemical reaction networks (CRNs) are known to be strongly Turing complete, even in the continuum mass-action formulation~\citep{Fages2017}. Learning is a computational task, so CRNs are in principle able to solve it. The interesting question is therefore not \emph{whether} CRNs can learn, but \emph{how and when}: under what structural conditions does a CRN exhibit learning behavior, and what are the necessary components? How can learning be mathematically defined and guaranteed both asymptotically and for finite time? How powerful are CRNs as learning machines, \ie how does their complexity scale in comparison to neuronal networks? Any reaction network that possesses the required components is then---in principle---capable of learning.

Several machine-learning models have already been successfully implemented as CRNs, providing initial indications that supervised learning may be feasible. For instance, \cite{hjelmfelt1991chemical} designed a CRN implementing a McCulloch--Pitts neuron, \cite{banda2013online} introduced a chemical perceptron, and further examples include Neural ODEs~\citep{nagipogu2025neural} and Boltzmann machines~\citep{poole2017chemical}. While these studies occasionally prove that the proposed CRNs correctly implement the intended models, they generally do not provide formal mathematical guarantees or definitions of learning behavior. Similarly, online learning frameworks such as multi-armed bandit algorithms and reinforcement learning methods have been used as external tools for the optimization and control of CRNs---for example, to predict reaction conditions~\citep{wang2024identifying} and to improve reaction performance~\citep{perez2020adaptivebandit}---but the explicit implementation of such algorithms \emph{within} CRNs themselves remains largely unexplored.

Here, we address this gap by introducing a CRN for which supervised learning is established through rigorous theoretical proof. Our construction rests on a structural analogy between stochastic chemical kinetics and stochastic models of spiking neural networks (SNNs) and thus enables comparing the two learning frameworks. In stochastic chemical kinetics, reactions between molecular species occur randomly at rates that depend on species counts; each reaction event updates these counts and thereby alters the rates of subsequent reactions. SNNs can be mathematically modeled using multivariate Hawkes processes~\citep{hawkes1971spectra,gerhard2017stability,bonnet2022neuronal,lambert2018reconstructing}, in which each neuron emits spikes according to a stochastic intensity that depends on the past activity of its presynaptic neurons. The connection between the two frameworks is particularly apparent at the level of simulation: Gillespie's algorithm~\citep{gillespie1976general,gillespie1977exact} for CRNs is closely related to Ogata's thinning algorithm~\citep{ogata1981lewis} for Hawkes processes. Building on this analogy, our CRN is inspired by the SNN introduced by~\citet{jaffard2026chani}, for which learning guarantees have been established.

Even though the analogy is inspired by stochastic chemical kinetics, we here consider the continuous mass-action limit in order to support deterministic computation. In this formulation, each chemical input and output species functions as a neuron, with continuous concentrations playing the role of firing rates. This results in a simple ordinary differential equation description of the CRN, for which we derive theoretical guarantees that are comparable to those known for the SNN. Surprisingly, the CRN turns out to be a more powerful learning machine than the SNN: we show that it achieves the same performance bounds without requiring species analogous to hidden neurons. Instead, in a CRN, information is directly transmitted from input to output species, and the learning guarantees hold under weaker assumptions. We hypothesize that this is because multiplication is intrinsic in the physics of mass-action kinetics, whereas it needs to be approximated by sequences of layer-wise operations in neuronal networks.

Concretely, the CRN proposed here solves a supervised classification task in which items are to be assigned to one of several classes based on environmental information encoded in the concentrations of catalytic input species, while classes are represented by output species. The system evolves through two consecutive phases. In the \emph{selection phase}, sets of input species with high flux are selected. In the subsequent \emph{learning phase}, the selected input species act as catalysts and, together with designated weight species, drive the production of output species through reactions analogous to the layered computations of a SNN. The weight species play the role of connection weights and evolve according to an expert aggregation algorithm~\citep{cesa2006prediction}: each set of selected input species acts as an expert providing information about the environment to each output species. Weight concentrations are updated based on the evolution of the expert fluxes over time. Importantly, species associated with different classes evolve independently, interacting neither directly nor indirectly through chemical reactions.

This system allows us to establish the following mathematical results: first, we derive local regret bounds for each output species (Proposition~\ref{prop reg}), establishing that each output species locally behaves so as to have higher-than-average concentration when presented with an item belonging to the class it encodes. Second, we establish an oracle inequality for the global behavior of the network (Theorem~\ref{th oracle}), demonstrating that the average network performance during learning is asymptotically optimal. Since output species behave independently of one another, this contributes to a theoretical understanding of emergence in systems in which global behavior arises from local interactions among components. Third, we compute the asymptotic weight concentrations in a more restricted setting, where we are able to give convergence rates (Theorem~\ref{theo lim wk}), and characterize the classes that the network can learn under these asymptotic weights (Theorem~\ref{th equiv}). We then determine the VC-dimension of the network in this setting (Proposition~\ref{prop vcdim}) as a task- and model-complexity measure. In Section~\ref{sec discussion}, we discuss the role of each component in these theoretical results, highlighting the key mechanisms that enable the emergence of learning behavior in CRNs. Finally, in Section~\ref{sec num res}, we present numerical experiments on handwritten digits data set,
showing that the deterministically simulated CRN achieves satisfactory test accuracy in low-complexity configurations and outperforms SNNs of comparable model complexity. This also experimentally confirms that CRNs without hidden layers can be more powerfull than SNNs with hidden layers.

\section{Problem setting}\label{sec sup learn}

A chemical species is denoted by a capital letter $X$, and its concentration by the corresponding lowercase letter $x$. For a quantity $x_e$ indexed by $e\in E$, we use the following notation: $x_E = (x_e)_{e\in E}$ and $\brac{x_e}_{e\in E} = \frac{1}{\abs{E}}\sum_{e\in E} x_e$. This notation is also used if the index is in superscript. For $n\in \N^*$, we denote $[n]=\{1,\dots,n\}$. For a finite set $E$, we denote by $\mathcal{P}_E$ the set of probability distributions over the set $E$. We study our proposed CRN in the framework of mass-action kinetics and deterministic rate equations.

We consider a supervised classification setting where the goal is to assign items to classes $k\in K$ using a CRN. More precisely, the CRN learns through sequential exposure to $M$ training samples. Each training sample $o_m$ for $m\in [M]$ alters the concentrations of catalytic chemical species $(X^i)_{i\in I}$ in a given time interval $[T^0_m, T^1_m]$: these species can be interpreted as {\em input species} describing features of the presented sample $o_m$, and the set $I$ represents the set of input species as well as the set of features they encode. In other words, the CRN aims to learn a mapping from a set of concentrations $(x^i)_{i\in I}$ in a given time interval to the set of classes $K$.

For example, in a biological cell, a gene regulatory network might be tasked with distinguishing among a set $K$ of possible cell states. In this case, a sample $o_m$ corresponds to measured environmental concentrations, the set $I$ contains relevant transcription factors, and the $x^i$ are their concentrations.  In a conventional image-classification example, the $o_m$ would be images, where each concentration $x^i$ represents the intensity of a given pixel.

For each presented sample $o_m$, the decision of the CRN is encoded in the concentrations of the chemical species $X^k$ associated with each class $k \in K$. The sample $o_m$ is assigned to the class corresponding to the species $X^k$ with highest concentration at time $T^1_m$. The species $(X^k)_{k\in K}$, which we call {\em output species}, interact with the input species $(X^i)_{i\in I}$ through {\em intermediary species}, which we introduce in Section~\ref{sec desc CRN}.

\subsection{Network architecture}
The depth of the CRN is defined by an integer $n\in \{1,\dots, \abs{I}\}$. Let $J_n$ be the set of all subsets of $I$ of size $n$. For certain sets $j\in J_n$, our CRN has equations involving the term $\sum_{i\in j} X^i$.
We then define the flux of a set $j$ as follows.

\begin{definition}[Flux of a set $j$]
  The flux of a set $j\in J_n$ is the product $\Phi^j := \prod_{i\in j} x^i$. It represents the flux (total propensity) of a reaction with $\sum_{i\in j} X^i$ as its left (reactant) side.
\end{definition}

The flux $\Phi^j$ can be interpreted as the strength of the combination of all features $i\in j$. For instance, in the context of a gene regulatory network, a high value of $\Phi^j$ at time $t$ indicates that the transcription factors associated with set $j$ are present at elevated concentrations simultaneously at this time. This condition may reflect coordinated activation of multiple genes. The larger the value of the depth $n$, the more complex the combination of features captured by the flux $\Phi^j$ for the input species $j$.

In order to allow for supervised training, our CRN implements an online learning framework using expert aggregation \citep{cesa2006prediction}. We consider the classic expert aggregation problem: over $M$ rounds, a forecaster has access to a set $E$ of experts. At each round $m\in[M]$, every expert $e \in E$ obtains an unknown gain $g_m^e \in \mathbb{R}$. Based on past observations, the forecaster chooses a probability distribution $p_m \in \mathcal{P}_E$ over the experts and subsequently receives the aggregated gain $g_m := p_m \cdot g_m^E$, where $g_m^E := (g_m^e)_{e\in E}$ and $\cdot$ is the standard scalar product on $\R^{\abs{E}}$.
This aggregated gain can be interpreted as the expected gain that the forecaster would obtain when selecting an expert at random according to $p_m$. Experts accumulate gains
$G_m^E := (G_m^e)_{e\in E}$ where $G_m^e := \sum_{m'=1}^m g_{m'}^e$, and the forecaster similarly accumulates $G_m := \sum_{m'=1}^m g_{m'}.$

The {\em regret} of the forecaster quantifies how far its strategy is from optimality. It is defined as:
\[
  R_M :=
  \max_{q \in \mathcal{P}_E}
  \sum_{m=1}^M q \cdot g_m^E - G_m \, .
\]
Thus, $R_M$ compares the best cumulative gain obtained by a fixed distribution over experts with the cumulative gain actually achieved by the forecaster.

In the present context, an expert aggregation algorithm is defined by a map $\varphi: \R \to \R_+$ such that for each $m$, $\max_{e\in E} \varphi(G^e_m) >0$. The rule to update the forecaster's probability distribution over experts $e$ is
\begin{equation} \label{eq phi}
  p^e_{m+1} = \frac{\varphi(G^e_m)}{\sum_{e'\in E}\varphi(G^{e'}_m) } \, .
\end{equation}
This guarantees {\em regret bounds} of order $\sqrt{M}$. Here, we use the classic EWA algorithm.
\medskip

\noindent \textbf{EWA (Exponentially Weighted Average)} \citep{littlestone1994weighted}.
The probability of selecting expert $e$ at round $m+1$ is
\[
  p_{m+1}^e= \frac{\exp(\eta G_m^e)}
  {\sum_{e'\in E} \exp(\eta G_m^{e'})} \, ,
\]
where $\eta > 0$ is the learning rate, which determines the sensitivity of the forecaster to past performance. See Appendix \ref{appendix EWA} for details on the regret bound achieved by this algorithm for different choices of $\eta$.

\section{Description of the chemical reaction network} \label{sec desc CRN}

During training, the CRN evolves over two consecutive phases: a selection phase and a learning phase. The selection phase performs unsupervised feature selection, identifying informative feature subsets and adjusting the network connectivity accordingly. The subsequent learning phase adjusts the concentrations of special weight species in the network using labeled training data.
These phases, along with the reactions occurring during them, are illustrated in Figure \ref{fig:CRN}. The mass-action rate equations for the network are given in Appendix \ref{appendix rate eq}.

\begin{figure}
  \centering
  \includegraphics[width=1\linewidth]{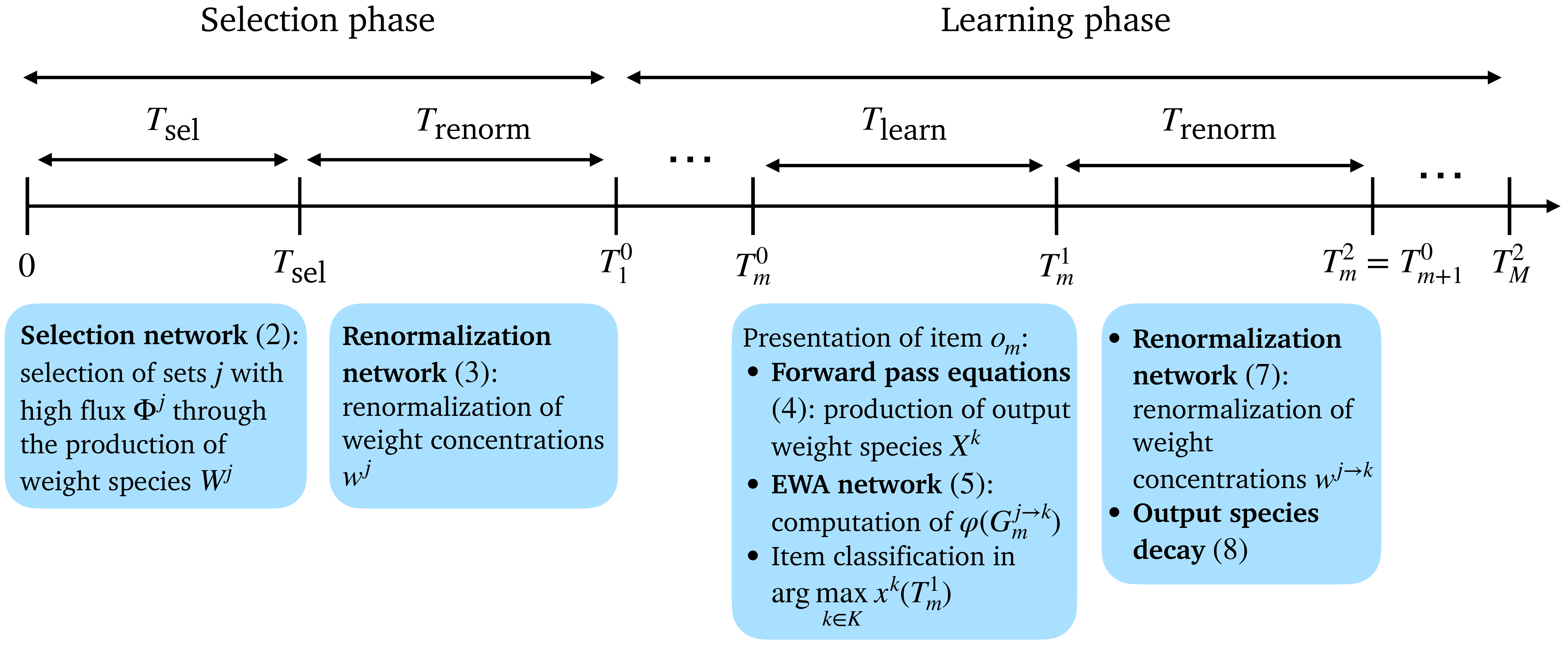}
  \caption{Schematic representation of the time evolution of the chemical reaction network during the selection and learning phases.}
  \label{fig:CRN}
\end{figure}

\subsection{Selection phase}

The selection phase is divided in two parts. The aim of the first part is to induce the production of chemical species $W^j$ only for sets $j \in J_n$ that exhibit a significant flux $\Phi^j$ during a portion of a time interval $[0, \Tsel]$. For example, during $[0, \Tsel]$, the network may be exposed sequentially to a certain number of training samples that may or may not be incorporated into the learning phase. The species $W^j$ are called weight species. The concentrations $(x^i)_{i\in I}$ are assumed to be continuous on $[0,\Tsel]$. The reactions occurring during the selection phase are called the {\em selection network} on $[0,\Tsel]$, defined as:
\begin{align}
  \forall  j\in J_n, \quad \sum_{i\in j} X^i & \xlongrightarrow{f} W^j + \sum_{i\in j} X^i  \label{eq Wj} \, ,
\end{align}
where $f$ is a continuous sigmoidal function defined on $\R_+$ such that there exist $ \theta, \rho >0$ with $f\equiv 0$ on $[0, \theta]$, $f$ is increasing on $(\theta, \theta +\rho)$, and $f \equiv \norm{f}_{\infty}$ on $[\theta + \rho,+\infty)$. The rate of this reaction at time $t\in [0,\Tsel]$ is $f(\Phi^j(t))$. Let
\[
  \J := \left\{j\in J \,\,\, | \,\,\, \exists [t_0,t_1] \subset [0,\Tsel] \text{ with }  t_0 < t_1 \text{ such that } \Phi^j > \theta  \text{ on } [t_0,t_1] \right\} \, .
\]
Reaction \eqref{eq Wj} happens only for indexes $j\in \J$, where the set $\J$ contains selected sets of species with fluxes surpassing the threshold $\theta$ during at least a sub-intervall of the selection phase, and for which weight species $W^j$ are produced. Therefore, the selection network is composed of $\abs{\J}$ reactions.

During the second part of the selection phase, the concentrations of the produced weight species are renormalized over a time interval $[\Tsel,T^0_1]$ ($\Tr := T^0_1 - \Tsel$) according to the following set of reversible reaction equations that we call the {\em renormalization network}:
\begin{equation}
  \begin{aligned}
    \forall j\in\J\quad & A \xlongrightarrow{b_1} A + W^j \\
    \forall j\in\J\quad & A + W^j \xlongrightarrow{b_2} A
  \end{aligned}
  \label{eq renorm wj}
\end{equation}
using a catalytic species $A$ whose concentration stays constant at a positive initial value.

\subsection{Learning phase}
The selection phase is followed by a learning phase during which the network learns to perform the classification task. The learning phase occurs during the time interval $[T^0_1,T^2_M]$, subdivided into $2M$ intervals $T^0_1 < T^1_1 < T^2_1 = T^0_2 < \dots <T^0_m < T^1_m < T^2_m = T^0_{m+1} < \dots < T^0_M < T^1_M < T^2_M$.  All the intervals $[T^0_m, T^1_m]$ (resp. $[T^1_m, T^2_m]$) have the same length $\Tl$ (resp. $\Tr$). See Figure \ref{fig:CRN} for an illustration.

At each iteration $m\in [M]$ of the learning phase, a training sample $o_m$ is presented during the time interval $[T^0_m, T^1_m]$, at the end of which the network classifies the sample into the class $\arg\max_{k\in K} x^k(T^1_m)$.
This involves the input species $(X^{i})_{i \in I}$, the weight species $(W^{j})_{j \in J}$, the output species $(X^{k})_{k \in K}$, as well as several additional species. Output species start with initial concentrations equal to zero. The additional species include the species $(W^{j \to k})_{j \in J, k \in K}$, referred to as output weight species. They quantify the influence of a set of input species $j\in \J$ on the output species $k$. Further additional species are the gain-function species $(H^{j \to k})_{j \in \J, k \in K}$, which record the sensitivity of species $i$ to class $k$ and are used to implement the expert-aggregation EWA algorithm.
All $H^{j \to k}$ share a common positive initial concentration, and the $W^{j \to k}$ share a common initial concentration equal to $\frac{b_{1}}{b_{2}\abs{\J}}$. The concentrations $(x^i)_{i\in I}$ are assumed to be continuous on $[T^0_m,T^1_m]$. This yields the following {\em forward-pass learning reactions} on $[T^0_m, T^1_m]$:
\begin{align}
  & \forall j\in \J, k\in K, \quad  W^j + W^{j\to k} + \sum_{i\in j} X^i  \xlongrightarrow{a_1} X^k +W^{j\to k} + W^j + \sum_{i\in j} X^i \, . \label{eq net}
\end{align}
These reactions are called ``forward pass'' due to their analogy with classic forward equations of artificial neural networks and SNN, describing how the combination of the weights $(W^j)_{j\in \J}$ and $(W^{j\to k})_{j\in \J, k\in K}$ and inputs $(X^i)_{i\in j}$ impacts output $X^k$. Chemically, the weight and input species act as catalysts. These reactions are illustrated in the right half of Figure \ref{fig:CRN SNN}.

Simultaneously to the forward-pass reactions, the reactions corresponding to the EWA network occur. Let $k^\star$ be the true class of training sample $o_m$. For $k\in K$, let $d_k := \eta \frac{M}{M^k}$ and $c_k := \eta \frac{M}{M^k}\times \frac{1}{\abs{K}-1}$, where $\eta >0$ is a constant and $M^k$ is the number of training samples that belong to class $k$ from a total of $M$ samples. The following reactions then implement the {\em EWA network} on $[T^0_m, T^1_m]$:
\begin{equation}
  \begin{aligned}
    \forall j\in \J, \quad &  H^{j\to k^\star} + W^j + \sum_{i\in j} X^i \xlongrightarrow{d_{k^\star}}  2 H^{j\to k^\star} + W^j + \sum_{i\in j} X^i \, , \\
    \forall j\in \J, k\neq k^\star,  \quad &  H^{j\to k} +W^j +\sum_{i\in j} X^i\xlongrightarrow{c_{k}}W^j +\sum_{i\in j} X^i \, , \\
    \forall j\in \J,  \quad & S + H^{j\to k^\star} \xlongrightarrow{d_{k^\star}} S + 2 H^{j\to k^\star} \, , \\
    \forall j\in \J, k\neq k^\star \quad & S + H^{j\to k} \xlongrightarrow{c_{k}} S + 2 H^{j\to k} \, .
  \end{aligned}
  \label{eq crn EWA}
\end{equation}
The catalytic species $S$ is included to ensure that the concentrations of the gain-function species remain positive.
Each output species $X^k$ is a forecaster, and the experts are the sets of input species $j\in \J$. The gain $g^{j\to k}_m$ of the set of input species $j$ w.r.t.~output species $k$ during round $m$ is
\begin{equation*} \label{eq gain output}
  g^{j\to k}_m :=
  \begin{cases}
    \left( w^j(T^1_0) \int_{T^0_m}^{T^1_m} \Phi^j + s_0\Tl \right)\times \frac{M}{M^k} &\text{if } o_m\in k \, ,   \\
    \left( - w^j(T^1_0)\int_{T^0_m}^{T^1_m} \Phi^j+ s_0\Tl\right) \!\times\! \frac{M}{M^{k'}} \!\times\! \frac{1}{\abs{K}-1} &\text{if } o_m \in k'\neq k \, .
  \end{cases}
\end{equation*}
Therefore, if the presented training sample $o_m$ belongs to class $k$, then species $X^k$ attributes a positive gain to the set of input species $j$ that grows linearly with its integrated flux. Conversely, if the presented sample $o_m$ belongs to another class, then species $X^k$ attributes a negative gain, \ie a loss, to the set $j$ that decreases linearly with its flux.
Then, the cumulated gain of the set $j$ w.r.t.~class $k$ is
\begin{equation} \label{eq def cum gain}
  G^{j\to k}_m := \sum_{m'=1}^M g^{j\to k}_{m'} \, ,
\end{equation}
and the rate equations associated with the EWA network \eqref{eq crn EWA} (see Appendix \ref{appendix rate eq} for details) provide that $h^{j\to k}(T^1_m)=\varphi(G^{j\to k}_m)$, where $\varphi : x \to \exp(\eta x)$ is the function in Eq.~\eqref{eq phi}.
Note that, in principle, it is possible to implement expert aggregation other than EWA by suitably modifying the reactions in Eq.~\eqref{eq crn EWA}.

Just like in the selection phase, the learning phase is also followed by weight renormalization during the time interval $[T^1_m, T^2_m]$. During this time interval, the following {\em renormalization network} is active:
\begin{equation}
  \begin{aligned}
    \forall j\in \J, k\in K, \quad & H^{j\to k} \xlongrightarrow{b_1}  H^{j\to k} + W^{j\to k} \\
    \forall j, j'\in \J, k\in K \quad &  H^{j'\to k} + W^{j\to k} \xlongrightarrow{b_2} H^{j'\to k} \, .
  \end{aligned}
  \label{eq renorm wk}
\end{equation}
Similarly to Eq.~\eqref{eq renorm wj}, this renormalizes the output-weight species so that, at equilibrium, for every $k \in K$ we have $\sum_{j \in \J} w^{j \to k} = \frac{b_1}{b_2}$. In addition, during the same time interval $[T^1_m, T^2_m]$, the {\em output species decay} as
\begin{align} \label{eq decay xk}
  \forall k\in K, \quad & X^k \xlongrightarrow{a_2} \emptyset \, .
\end{align}
This resets the output species' concentrations to be close to zero before the beginning of the training round for the next training sample $m+1$.

Note that the equations of the learning phase are designed such that each output species $X^k$ receives information from the input species (forward-pass equations \eqref{eq net}) and runs its own expert aggregation algorithm through the EWA network \eqref{eq crn EWA} and renormalization network \eqref{eq renorm wk}, independently from the other output species: there is no equation involving species $X^k$, $W^{j\to k}$, or $H^{j\to k}$ from two different classes. Therefore, this CRN can be seen as a collection of \textbf{local equations} for each class $k$.

\subsection{Inference phase} \label{sec inference}

After the learning phase, the CRN can perform inference on new data by fixing the output weight concentrations to their final values $(w^{j\to k}(T^2_M))_{j\in \J, k\in K}$ and evolving only according to the forward-pass Eq.~\eqref{eq net} when presented with a new data sample, followed by the output species decay in Eq.~\eqref{eq decay xk} between two sample presentations. In the numerical results reported in Section \ref{sec num res}, this procedure is used to evaluate the network's accuracy on test data.

\subsection{Network complexity} \label{sec net complexity}

The total number of reactions for the selection and learning phases is $\abs{\J}(3+3\abs{K})$. These reactions involve $2+\abs{I} + \abs{\J} + \abs{K} + 2 \abs{\J}\abs{K}$ chemical species. Consequently, the choice of the threshold $\theta$, which appears in the rate of the selection network \eqref{eq Wj} and defines the set $\J$, is crucial. Increasing $\theta$ reduces $\abs{\J}$; however, as will be shown in Proposition \ref{prop vcdim}, the value of $\abs{\J}$ directly governs the complexity of the classes that the CRN can learn and, therefore, its statistical power. This defines the usual trade-off between network complexity and the capacity to represent complex class boundaries.

The total duration of each reaction is also important for the computational cost of implementing such a network. The selection network \eqref{eq Wj} operates for a duration $\Tsel$, and the renormalization network \eqref{eq renorm wj} for $\Tr$. The forward pass in Eq.~\eqref{eq net} and the EWA network in Eq.~\eqref{eq crn EWA} operate during $M$ time intervals of duration $\Tl$, hence for a total duration of $M\Tl$. Finally, the renormalization network \eqref{eq renorm wk} and the output species decay \eqref{eq decay xk} happen during $M$ time intervals of length $\Tr$, for a total duration of $M\Tr$.

To obtain the classification results, it is necessary to account for the concentrations of all output species at the end of each sample presentation. These concentrations evolve according to the forward-pass equations \eqref{eq net}. In addition, the weight concentrations $w^{j\to k}$ must be renormalized after each presented training sample by implementing the renormalization network \eqref{eq renorm wk}.

However, if the network is to be evaluated only after training has concluded, \ie for inference, then it is not necessary to produce output species during the learning phase, nor to renormalize the weights at each step $m$. In this case, it is sufficient to run the renormalization network \eqref{eq renorm wk} once, after all learning passes, for a single duration $\Tr$. The output species can then be generated using the forward-pass reactions \eqref{eq net} for an arbitrary duration corresponding to the presentation of test samples, as described in Section \ref{sec inference}. Within this simplified network, the asymptotic analysis presented in Section \ref{sec asymp analysis} remains valid.

\subsection{Analogy with spiking neural networks} \label{sec analogy SNN}

The proposed CRN architecture is inspired by the spiking neural network (SNN) introduced by \citet{jaffard2026chani}. The analogy between the two models is illustrated in Figure \ref{fig:CRN SNN}. In those SNN, the input layer consists of neurons that encode a set of features $I$ describing the training samples. The hidden layers are organized such that each hidden neuron $j$ represents correlations between two presynaptic neurons. Consequently, a neuron in the final hidden layer effectively encodes correlations among $2^L$ input neurons, where $L$ is the number of hidden layers. The output layer contains neurons coding for each class.

Every hidden layer initially contains neurons encoding all possible pairs of presynaptic neurons. The layer is subsequently trained using an expert-aggregation algorithm to detect correlations between  neuron pairs, after which neurons whose encoded correlations are below a certain threshold are pruned. Once all hidden layers have undergone training and pruning, the output layer is trained, again using expert aggregation, to perform the classification task. The analysis by \cite{jaffard2026chani} provides theoretical guarantees on the learning performance of such SNN.

In the present CRN setting, each input and output species $(X^l)_{l \in I \cup K}$ functions analogously to a SNN neuron, with its
concentration interpreted as the firing rate. The set $\J$ plays the role of the final hidden layer in the SNN. However, information is encoded more efficiently in the CRN than in the SNN: there is no need to introduce species corresponding to hidden neurons, as the input species directly influence the output species through the forward-pass equations~\eqref{eq net}. The selection mechanism is also more efficient. Rather than enumerating every possible subset $j$ of input species of size $n$ and pruning the outputs, the CRN begins empty, and whenever
a flux $\Phi^j$ exceeds the threshold $\theta$, a weight species $W^j$ is produced. This efficiency arises from the intrinsic capacity of CRNs to implement product operations: chemical mass-action reactions containing the term $\sum_{i \in j} X^i$ yields a flux $\Phi^j = \prod_{i \in j} x^i$ in its propensity. In contrast, SNNs do not have a direct mechanism for computing products of presynaptic activities. Moreover, the CRN network depth $n$, which corresponds to the size of the correlations encoded in the last hidden layer in the SNN, may be any integer and is not restricted to powers of two as in the SNN case.
In section \ref{sec th res}, we show that thanks to these differences, the same learning guarantees can be proven for CRN as for SNN, but that the present CRN achieves them more efficiently and within a less restrictive framework of assumptions.

\begin{figure}
  \centering
  \includegraphics[width=1\linewidth]{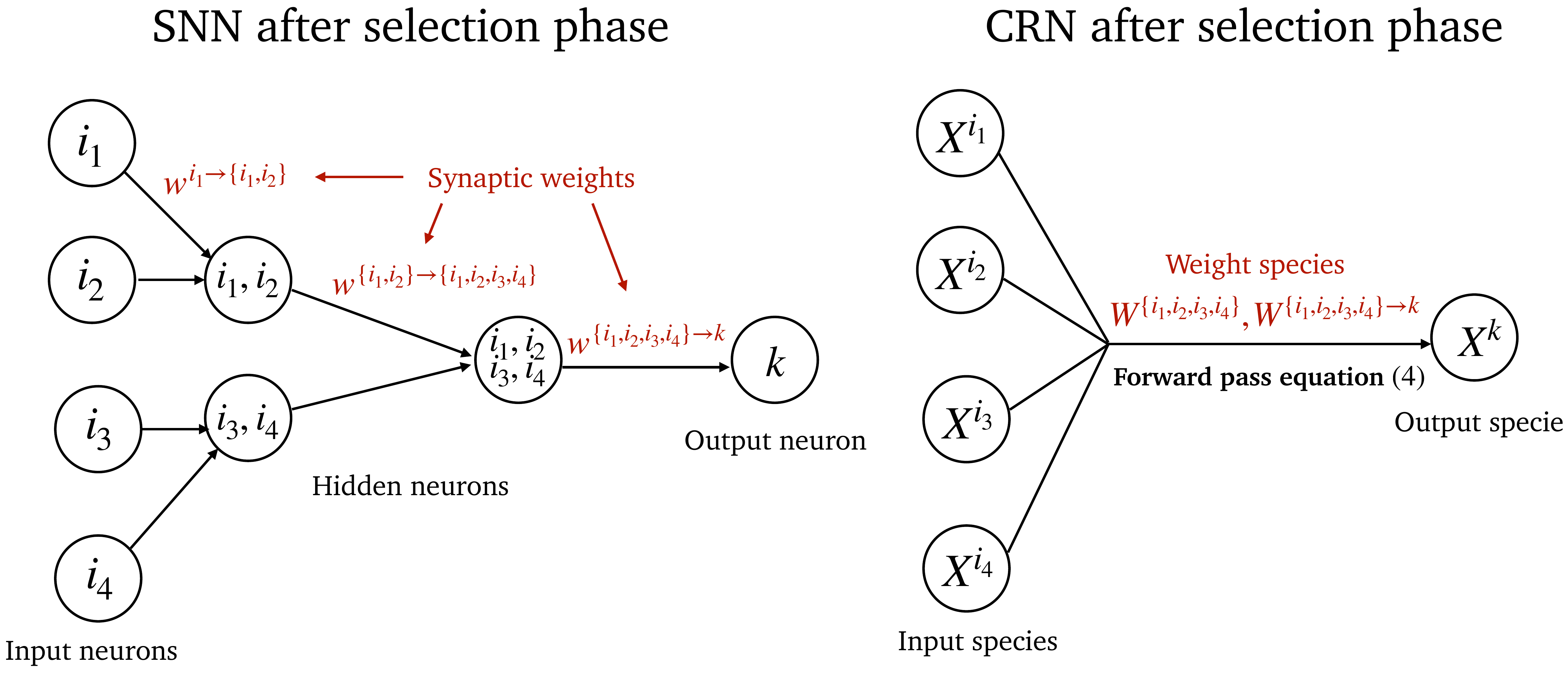}
  \caption{Analogy between the structure of the SNN proposed by \cite{jaffard2026chani} (left) and the present CRN (right).}
  \label{fig:CRN SNN}
\end{figure}

\subsection{Links to artificial neural networks}

The presented CRN and the learning task that it is designed to solve are deeply rooted in the field of machine learning, and the supervised classification setting described above is canonical.
Moreover, since artificial neural networks (ANNs) and spiking neural networks (SNNs) are closely related and have historically informed one another, the proposed CRN exhibits structural similarities to a conventional ANN. In particular, the input and output species function analogously to input and output neurons, while the weight species play a role comparable to connection weights (see Figure~\ref{fig:CRN SNN}). The forward-pass reactions~\eqref{eq net} therefore resemble the computation performed across multiple layers of an ANN.

Machine learning methods are also central to our theoretical analysis. In particular, we use regret bounds associated with the expert-aggregation algorithm implemented within our CRN to establish guarantees on classification accuracy. Additionally, in Theorem~\ref{th equiv}, we establish a result that is analogous to the Perceptron Convergence Theorem~\citep{rosenblatt1962principles}.

\section{Performance measures}
We derive guarantees about the above CRN's ability to solve the classification task defined in Section \ref{sec sup learn} in terms of local and global discrepancy measures.

\begin{definition}[Species discrepancy]
  The species discrepancy of species $X^k$ is defined as
  \[
    \disc^k := \Brac{x^k(T^1_m)}_{m, \ o_m\in k} - \Brac{x^k(T^1_m)}_{
      \begin{subarray}{l}
        k'\neq k \\
        m, \ o_m \in k'
    \end{subarray}}.
  \]
\end{definition}

The species discrepancy measures the difference between the average concentration of species $X^k$ at the end of the presentation of a training sample of its own class, and its average concentration at the end of the presentation of a sample of one of the other classes. This provides information about how much the concentration of species $X^k$ is larger than average after being exposed to a training sample of its own class. It is a {\bf local measure}, because it involves only the species $k$ itself.

A useful variant of the species discrepancy is the discrepancy with constant weights.

\begin{definition}[Species discrepancy with constant weights]
  For a family $q^k\in \frac{b_1}{b_2} \mathcal{P}_{\J}$, the concentration discrepancy of species $X^k$ with output weight concentrations $q^k$ is defined as
  \begin{align*}
    \disc^k(q^k) &:= \Brac{\frac{a_1b_1}{b_2}\int_{T^0_m}^{T^1_m} \sum_{j\in \J} q^{j\to k} \Phi^j(t) \,\mathrm{d}t}_{m, \ o_m\in k} \\
    & - \Brac{\frac{a_1 b_1}{b_2}\int_{T^0_m}^{T^1_m} \sum_{j\in \J} q^{j\to k}  \Phi^j(t) \,\mathrm{d}t}_{
      \begin{subarray}{l}
        k'\neq k \\
        m, \ o_m \in k'
    \end{subarray}}.
  \end{align*}
\end{definition}
This is the species discrepancy that $X^k$ would have if the weight concentrations $w^k = (w^{j\to k})_{j\in \J}$ were all equal to $q^k$, the input weight concentrations $w^j$ were all at equilibrium $\frac{b_1}{b_2}$, and if the initial concentrations $x^k(T^0_m)$ were zero for every $m$. This quantity measures the performance of a constant-weight family $q^k$ in recognizing samples of class $k$ in the ideal case where the renormalization network \eqref{eq renorm wj} and output species decay equation \eqref{eq decay xk} have reached equilibrium. It therefore provides a baseline for the species discrepancy of $X^k$, the latter depending on the network's weight concentrations $w^j$ and $w^k$.

To compare the overall classification performance of the network, one has to compare the concentrations of all output species. This is a global measure, which we call network discrepancy.

\begin{definition}[Network discrepancy]
  The network discrepancy is defined as
  \[  \Disc :=  \Brac{x^{k^\star}(T^1_m) - x^k(T^1_m)}_{
      \begin{subarray}{l}
        k^\star\in K \\
        k\neq k^\star \\
        m, \ o_m \in k^\star
    \end{subarray}}.
  \]
\end{definition}
The network discrepancy measures the average difference between the concentration of a species $X^{k^\star}$ and the concentration of the other output species at the end of the presentation of a training sample of class $k^{\star}$. A network discrepancy larger than 1 means that, on average, the concentration of the species coding for the correct class is higher than the concentrations of the other output species. Therefore, this measure quantifies the classification contrast of the CRN, providing a \textbf{global measure} that involves the concentration of all output species $X^k$ for $k\in K$.

Similar to above, one can also define the version with constant weights.

\begin{definition}[Network discrepancy with constant weights]
  For a family $q^K\in (\frac{b_1}{b_2} \mathcal{P}_{\J})^{\abs{K}}$, the network discrepancy with constant weights $q^K$ is defined as
  \[  \Disc(q^K) :=   \frac{a_1 b_1}{b_2}\Brac{\int_{T^0_m}^{T^1_m} \sum_{j\in \J} q^{j\to k^\star} \Phi^j(t) \,\mathrm{d}t - \int_{T^0_m}^{T^1_m} \sum_{j\in \J} q^{j\to k} \Phi^j(t) \,\mathrm{d}t}_{
      \begin{subarray}{l}
        k^\star\in K \\
        k\neq k^\star \\
        m, \ o_m \in k^\star
    \end{subarray}}.
  \]
\end{definition}
This represents the discrepancy that the CRN would have if the weight concentrations $w^K = (w^k)_{k\in K}$ were all equal to $q^K$, the input weight concentrations $w^j$ were at equilibrium $\frac{b_1}{b_2}$, and if the initial concentrations $x^k(T^0_m)$ were zero for all $m$.
\newline

Some of the theoretical results below are derived for a more specific setting of the CRN fulfilling the following assumption.

\begin{assumption}[Repetitive training samples] \label{assump natures and nb obj}
  There exists a set of sample types $\Obj$ such that the family of training samples $(o_m)_{1\leq m \leq M}$ consists of repetitions of types $o\in \Obj$: for every $m\in [M]$, there exists $o\in \Obj$ such that $o_m = o$, and two training samples $m_1$ and $m_2$ can share the same type $o_{m_1} = o_{m_2} = o\in \Obj$. Moreover, during the learning phase, each type of sample $o\in \Obj$ is presented to the network equally many times.
\end{assumption}

For instance, in the context of image classification, this assumption means that the network is learning to classify a (possibly small) set of images $\Obj$ by seeing them repeatedly during training. This is a generalization of the usual epoch-wise training in ANNs, where each sample is presented multiple times over epochs.
Consequently, under this assumption, for each object type $o\in \Obj$ the concentrations $(x^i)_{i\in I}$ are the same in every interval $[T^0_m, T^1_m]$ such that $o_m = o$. For a subset $j\in \J$ of input species, we then denote by $\int_o \Phi^j$ the common value of all the integrals $\int_{T^0_m}^{T^1_m} \Phi^j(u) \,\mathrm{d}u$ where $o_m = o$.

Under Assumption \ref{assump natures and nb obj}, we define the notion of flux discrepancy as follows.

\begin{definition}[Flux discrepancy] \label{def feat disc}
  The flux discrepancy of a set $j\in \J$ of input species w.r.t. class $k\in K$ is defined as
  \[
    \FDisc^{j\to k} := \Brac{\int_o \Phi^j}_{o\in k} - \Brac{\int_o \Phi^j}_{
      \begin{subarray}{l}
        k'\neq k \\
        o\in k'
    \end{subarray}} \, .
  \]
\end{definition}
This compares the average flux of a set of input species $j$ when presented with a sample of class $k$ to the average flux when presented with samples of other classes. The flux discrepancy thus indicates the extent to which the set of input species $j$ is sensitive to samples of class $k$.

\subsection{Optimal weight family}
Under Assumption \ref{assump natures and nb obj}, we define optimal output weight concentration families to which we can compare the weights learned by the CRN.

\begin{definition}[Optimal weight family]
  Let $q^K= (q^k)_{k\in K}\in \frac{b_1}{b_2}\mathcal{P}_{\J}$ be an output weight family on the whole network.
  Then, $q^K$ is said to be an optimal family if
  \[
    \forall k\in K, o\in \Obj, \quad \sum_{j\in \J} q^{j\to k} \int_o \Phi^j>0 \quad \text{if and only if} \quad o\in k.
  \]
\end{definition}

In other words, an optimal weight family is a family
of output weight concentrations such that each output species $X^k$ encodes exactly one class $k$: its concentration is positive if and only if the presented sample belongs to class $k$. This requirement is stronger than assuming that a weight concentration family enables to correctly classify every sample, as it additionally imposes that the concentration $x^k$ remain exactly zero whenever the presented sample does {\em not} belong to class $k$.

\section{Theoretical results} \label{sec th res}

We prove theoretical guarantees for the CRN's ability to solve the classification task defined in Section \ref{sec sup learn}.
This includes the derivation of error bounds that depend on the network parameters. For ease of notation, however, we make explicit only the dependence on the cardinalities $\abs{I}$, $\abs{\J}$, and $\abs{K}$, which determine the number of reactions and species in the system, as well as on the renormalization network duration $\Tr$ and the number of training samples $M$. We show that the error bound vanishes if these last two parameters are large. The time durations $\Tsel$ and $\Tl$ are treated as user-defined fixed constants. They are not required to be asymptotically large, and their dependence is thus omitted. The same applies to the reaction rate constants and the initial concentrations. All terms are, however, made explicit in the proofs available in Appendix \ref{sec:proofs}.
In section \ref{sec discussion}, we discuss the role of each component of the CRN in deriving the theoretical results.

\subsection{Selection phase}

We start by proving that the renormalization network \eqref{eq renorm wj} of the selection phase achieves its purpose, which is to renormalize all the concentrations of the produced weight species $W^j$ to a common value.

\begin{proposition} \label{prop bound wj}
  There exist positive constants $C_1$ and $C_2$, independent of $\abs{I}, \abs{\J}, \abs{K}$, $\Tr$, and $M$, such that for every $j\in \J$ the concentration $w^j(T^0_1)$ satisfies
  \[
    \AAbs{w^j(T_1^0) - \frac{b_1}{b_2}} \leq C_1 \mathrm{e}^{-C_2\Tr }.
  \]
  Consequently, for $\Tr\to \infty$, we have $w^j(T_1^0) \to \frac{b_1}{b_2}.$
\end{proposition}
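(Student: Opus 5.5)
The proof is an explicit solution of a scalar linear ODE on $[\Tsel, T^0_1]$, combined with a uniform bound on $w^j(\Tsel)$ produced by the selection network.

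\textbf{Step 1: rate equation during renormalization.} During $[\Tsel,T^0_1]$ only the renormalization network \eqref{eq renorm wj} acts on $W^j$. The species $A$ is catalytic with constant concentration $a_0>0$, so mass-action kinetics gives, for each $j\in\J$,
\[
  \frac{\mathrm{d} w^j}{\mathrm{d}t} = b_1 a_0 - b_2 a_0 w^j .
\]
This is decoupled across $j$ and linear, with solution
\[
  w^j(t) - \frac{b_1}{b_2} = \Big(w^j(\Tsel) - \frac{b_1}{b_2}\Big)\mathrm{e}^{-b_2 a_0 (t-\Tsel)} .
\]
Evaluating at $t=T^0_1$, so that $t-\Tsel=\Tr$, yields the claim with $C_2 = b_2 a_0$ and $C_1 = \sup_{j} \big|w^j(\Tsel) - \frac{b_1}{b_2}\big|$. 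The constant $C_2$ depends only on a rate constant and an initial concentration, which the paper treats as fixed.

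\textbf{Step 2: uniform bound on $w^j(\Tsel)$.} During the selection network \eqref{eq Wj}, $W^j$ appears only as a product, with rate $f(\Phi^j(t))$. Assuming $w^j(0)=0$ (the network "begins empty"), we get
\[
  w^j(\Tsel) = \int_0^{\Tsel} f(\Phi^j(t))\,\mathrm{d}t \in \big[0, \norm{f}_\infty \Tsel\big].
\]
Hence $\big|w^j(\Tsel) - \frac{b_1}{b_2}\big| \le \max\big(\frac{b_1}{b_2}, \norm{f}_\infty\Tsel\big)$, and we may take $C_1 = \frac{b_1}{b_2} + \norm{f}_\infty \Tsel$. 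This bound depends only on $f$, $\Tsel$, $b_1$ and $b_2$, and not on $\abs{I}$, $\abs{\J}$, $\abs{K}$, $\Tr$ or $M$. That independence is exactly what is required, since $\Tsel$ is treated as a fixed user constant.

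\textbf{Step 3: convergence.} Letting $\Tr\to\infty$ in the bound gives $w^j(T^0_1)\to \frac{b_1}{b_2}$.

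\textbf{Main point to check.} The only delicate part is making sure nothing else touches $W^j$ on $[\Tsel,T^0_1]$. The selection reaction is active only on $[0,\Tsel]$, and the forward-pass and EWA reactions use $W^j$ only as a catalyst and start at $T^0_1$. It must also be justified that $C_1$ is uniform in $j$, which holds because $f$ is bounded. If $W^j$ for $j\notin\J$ needs discussion, it is irrelevant here, since the statement concerns only $j\in\J$.
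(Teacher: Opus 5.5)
Your proposal is correct and follows the paper's proof essentially step for step. You solve the linear renormalization ODE to get the factor $\mathrm{e}^{-b_2 a_0 \Tr}$, then bound $w^j(\Tsel)=\int_0^{\Tsel} f(\Phi^j(t))\,\mathrm{d}t \in [0,\norm{f}_\infty \Tsel]$; the paper takes $C_1=\max\{\frac{b_1}{b_2},\norm{f}_\infty\Tsel\}$ and $C_2=b_2a_0$, so your sum-form $C_1$ is a slightly looser but equally valid choice.
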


The explicit values of the constants are given in the proof of the proposition in Appendix \ref{sec:proof prop bound wj}. This result states that after long renormalization $\Tr$, all weight concentrations $w^j(T_1^0)$ are approximately equal to the shared value $\frac{b_1}{b_2}$, which depends solely on the reaction rates of the renormalization network.

Note that the deterministic framework of reaction rate equations provides for a simpler analysis of the selection phase than the stochastic framework of SNNs~\citep{jaffard2026chani}: here, we do not need to prove that the selected sets $j$ are the ones whose fluxes surpass a certain threshold; this is automatically the case. Moreover, as was discussed in Section \ref{sec analogy SNN}, information is encoded more efficiently in the CRN than in the SNN, because we start with an empty set of selected sets $j$. A weight species $W^j$ is created and the index $j$ is added to the system whenever the  flux $\Phi^j$ exceeds the threshold.

\subsection{Average learning} \label{sec av learn}
We next prove that the presented CRN correctly implements expert aggregation algorithm EWA, and that it solves the learning task on average.

For $k\in K$, we define the family $(\bw^{j\to k}_m)_{j\in \J, m\in [M+1]}$ by
\[
  \forall j\in \J, \quad \bw^{j\to k}_1 = \frac{b_1}{b_2\abs{\J}}
\]
and for $m\geq 1$
\[
  \forall j\in \J, \quad \bw^{j\to k}_{m+1} = \frac{b_1}{b_2} \frac{\exp\left(G^{j\to k}_m\right)}{\sum_{j'\in \J} \exp(G^{j'\to k}_m)}
\]
where $G^{j\to k}_m$ is defined in Eq.~\eqref{eq def cum gain}. Then, the probability distribution generated by the EWA algorithm with gains $(g^{j\to k}_m)_{j\in \J}$ is $(\frac{b_2}{b_1} \Bar{w}^{j\to k}_m)_{j\in \J, m\in [M]}$. In the following proposition, we prove that the weight concentrations $(w^{j\to k}(T^2_m))_{j\in \J, m\in [M]}$ approximate this distribution.

\begin{assumption}[Bounded flux] \label{assump delta}
  There exists a constant $\alpha>0$, independent of $M$, such that $\sup_{j\in \J, t\in [T^1_0,T^2_M]} \Phi^j(t) \leq \alpha$. Besides, the quantity $\delta := s_0 - 2\frac{b_1}{b_2}\alpha$ is positive.
\end{assumption}

\begin{proposition} \label{prop cvg EWA}
  Under Assumption \ref{assump delta} (bounded flux),
  there exist positive constants $C_3$, $C_4$, and $C_5$, independent of $\abs{I}, \abs{\J}, \abs{K}, \Tr$, and $M$,
  such that if
  $\Tr \geq C_3$
  then for every $j\in \J$, $k\in K$, and $m\in [M]$ we have
  \[
    \abs{w^{j\to k}(T^2_m) - \Bar{w}^{j\to k}_{m+1}} \leq C_4 \mathrm{e}^{-C_5\abs{\J} \Tr}.
  \]
  Therefore, for $\Tr\to \infty$, we have $w^{j\to k}(T^2_m) \to \Bar{w}^{j\to k}_{m+1}$.
\end{proposition}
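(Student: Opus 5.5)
We first integrate the mass-action rate equations of the learning phase for a fixed class $k$, then analyse the renormalization network \eqref{eq renorm wk}.

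\textbf{Step 1: the gain species at $T^1_m$.} On $[T^0_m,T^1_m]$ the species $H^{j\to k}$ obey linear ODEs whose coefficients depend only on the catalysts $S$, $W^j$ and $X^i$. Since $w^j$ is constant after $T^0_1$ (no reaction consumes or produces it), we have $w^j(t)=w^j(T^0_1)$. Then
\[
\dot h^{j\to k}=\pm\kappa_k\, w^j\Phi^j h^{j\to k}+\kappa_k s_0 h^{j\to k},
\]
with $\kappa_k$ equal to $d_k$ or $c_k$, the sign depending on whether $o_m\in k$. Integrating gives
\[
h^{j\to k}(T^1_m)=h^{j\to k}(T^0_m)\exp\bigl(\eta g^{j\to k}_m\bigr).
\]
During $[T^1_m,T^2_m]$ the $H$ species are untouched, so by induction $h^{j\to k}(T^1_m)=h_0\exp(\eta G^{j\to k}_m)$. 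Assumption \ref{assump delta} ensures that each gain is at least $\delta\Tl>0$ up to the factor $M/M^k$, so $h^{j\to k}$ never decreases. This gives a uniform lower bound $h^{j\to k}(T^1_m)\ge h_0$, which we will need. (We treat the $\eta$ in $\bw$ as absorbed, matching the paper's convention.)

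\textbf{Step 2: the renormalization ODE.} On $[T^1_m,T^2_m]$, with the $h^{j'\to k}$ frozen, set $H_k:=\sum_{j'}h^{j'\to k}$. Then
\[
\dot w^{j\to k}=b_1h^{j\to k}-b_2H_k\,w^{j\to k}.
\]
This is a linear ODE with constant coefficients, whose equilibrium is
\[
w^\star=\frac{b_1}{b_2}\,\frac{h^{j\to k}}{H_k}=\frac{b_1}{b_2}\,\frac{\exp(G^{j\to k}_m)}{\sum_{j'}\exp(G^{j'\to k}_m)}=\bw^{j\to k}_{m+1},
\]
the common factor $h_0$ cancelling. Solving explicitly,
\[
\bigl|w^{j\to k}(T^2_m)-\bw^{j\to k}_{m+1}\bigr|=\bigl|w^{j\to k}(T^1_m)-\bw^{j\to k}_{m+1}\bigr|\,e^{-b_2H_k\Tr}.
\]
Because every $h^{j'\to k}\ge h_0$, we have $H_k\ge\abs{\J}h_0$. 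This produces the rate $\mathrm{e}^{-C_5\abs{\J}\Tr}$ with $C_5=b_2h_0$.

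\textbf{Step 3: bounding the prefactor.} It remains to bound $|w^{j\to k}(T^1_m)-\bw^{j\to k}_{m+1}|$ uniformly in $m$, $\abs{\J}$ and $M$. The $W^{j\to k}$ are catalysts in \eqref{eq net} and in no other reaction of $[T^0_m,T^1_m]$, so $w^{j\to k}(T^1_m)=w^{j\to k}(T^2_{m-1})$. This quantity is nonnegative, and by Step 2 applied inductively it lies within $C_4e^{-C_5\abs{\J}\Tr}$ of $\bw^{j\to k}_m\in[0,b_1/b_2]$. At $m=1$ the initial value is $\frac{b_1}{b_2\abs{\J}}$. Hence both quantities lie in $[0,\,b_1/b_2+C_4]$, and the difference is at most $C_4:=b_1/b_2+1$, once $\Tr\ge C_3$ is large enough that the inductive error stays below $1$. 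Only this induction, closing the bound uniformly in $m$, uses the condition $\Tr\ge C_3$.

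\textbf{Main obstacle.} The key point is the lower bound on $H_k$ that yields the factor $\abs{\J}$ in the exponent. This is exactly where $\delta>0$ from Assumption \ref{assump delta} enters: it guarantees that losses never shrink $h^{j\to k}$ below its initial value. We would first verify the sign computation $c_k(s_0-w^j\Phi^j)\ge c_k\delta$, using $w^j\le 2b_1/b_2$. That bound on $w^j$ follows from Proposition \ref{prop bound wj} once $\Tr\ge C_3$ is large enough that $C_1e^{-C_2\Tr}\le b_1/b_2$, which is a second source of the constant $C_3$.
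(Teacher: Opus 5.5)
Your proposal is correct and follows essentially the paper's route: the $\delta$-positivity of the $H$-production rates (via $w^j\le 2b_1/b_2$ for $\Tr$ large) gives $\sum_{j'}h^{j'\to k}(T^1_m)\ge \abs{\J}h_0$, the explicit solution of the linear renormalization ODE gives the factor $\mathrm{e}^{-b_2h_0\abs{\J}\Tr}$, and an induction on $m$ bounds the prefactor. The only difference is that the paper's induction passes through $\Bar{w}^{j\to k}_m$ by the triangle inequality with constant $4b_1/b_2$, whereas you use a range bound with $b_1/b_2+1$. One small slip: in Step 3 the range should read $[0,\,b_1/b_2+1]$ (using that the inductive error is at most $1$), not $[0,\,b_1/b_2+C_4]$. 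In fact no $\Tr$-condition is needed for the prefactor at all, since $w^{j\to k}(T^2_m)$ is a convex combination of $w^{j\to k}(T^1_m)$ and $\Bar{w}^{j\to k}_{m+1}$ and hence stays in $[0,b_1/b_2]$ for every $m$.
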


The exact error term is given in the proof of the proposition in Appendix \ref{sec:proof prop cvg EWA}. This result shows that in each training round $m$, the EWA network \eqref{eq crn EWA} and the renormalization network \eqref{eq renorm wk} are effectively implementing the expert aggregation algorithm EWA. Note that for the approximation to hold, we only need the duration $\Tr$ of the renormalization network to be large.

\begin{assumption}[$\xi$-balanced] \label{assump xi}
  There exists a constant $\xi>0$, independent of $M$, such that for every $k\in K$, $M^k/M \geq \xi$.
\end{assumption}

This assumption requires that the fraction of training samples $o_m$ of any class is at least $\xi$. This ensures that the CRN is exposed to a significant number of samples from each class during training. In the following proposition, we derive a regret bound on the species discrepancy of $X^k$ under this assumption.

\begin{proposition}[Regret bound of species $X^k$] \label{prop reg}
  Let $k\in K$. Suppose Assumptions \ref{assump delta} (bounded flux) and \ref{assump xi} ($\xi$-balanced) hold, and let $\eta = \frac{\xi}{2\Tl s_0} \sqrt{8\frac{\ln(\abs{\J})}{M}}$. Then, there exist positive constants $C_6$, $C_7$, and $C_8$, independent of $\abs{I}, \abs{\J}, \abs{K}$, $\Tr$, and $M$, such that if $\Tr \geq  C_6$ we have
  \[
    \disc^k \geq \max_{q \in \frac{b_1}{b_2}\mathcal{P}_I}  \disc^k(q^k) -C_7\left(\abs{\J} \mathrm{e}^{-C_8\Tr} + \ln(\abs{\J})^{1/2} M^{-1/2}\right).
  \]
\end{proposition}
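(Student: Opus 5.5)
The plan is to reduce the species discrepancy $\disc^k$ to a linear function of the per-round expert gains $g^{j\to k}_m$, and then apply the standard EWA regret bound (Appendix~\ref{appendix EWA}) to the ideal weights $\bw^{j\to k}_m$. The errors from non-ideal concentrations will be controlled by Propositions~\ref{prop bound wj} and~\ref{prop cvg EWA}.

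\textbf{Step 1: express $x^k(T^1_m)$.} On $[T^0_m,T^1_m]$ the forward-pass reaction \eqref{eq net} only produces $X^k$, with catalysts whose concentrations do not change during this interval: the $w^{j\to k}$ are frozen at $w^{j\to k}(T^2_{m-1})$ and the $w^j$ at $w^j(T^0_1)$. Hence
\[
x^k(T^1_m)=x^k(T^0_m)+a_1\sum_{j\in\J} w^j(T^0_1)\, w^{j\to k}(T^2_{m-1})\int_{T^0_m}^{T^1_m}\Phi^j .
\]
The residual $x^k(T^0_m)$ comes from the decay \eqref{eq decay xk}. It is bounded by $e^{-a_2\Tr}$ times $x^k(T^1_{m-1})$, which is itself at most $a_1(\frac{b_1}{b_2}+1)^2\Tl\alpha$ by Assumption~\ref{assump delta}. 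So this residual is exponentially small in $\Tr$.

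\textbf{Step 2: replace by ideal quantities.} Replace $w^j(T^0_1)$ by $\frac{b_1}{b_2}$ using Proposition~\ref{prop bound wj}, and $w^{j\to k}(T^2_{m-1})$ by $\bw^{j\to k}_m$ using Proposition~\ref{prop cvg EWA}. This requires $\Tr\ge C_3$, which fixes $C_6$. Summing over $j\in\J$ produces an error of order $\abs{\J}e^{-C\Tr}$, since each flux integral is at most $\alpha\Tl$. Thus $\disc^k$ equals $\disc^k$ evaluated with the time-varying ideal weights $\bw_m$, up to $C\abs{\J}e^{-C_8\Tr}$.

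\textbf{Step 3: rewrite as gains.} The averages in $\disc^k$ are $\frac{1}{M^k}\sum_{o_m\in k}$ and $\frac{1}{M-M^k}\sum_{o_m\notin k}$. The gains are built with weights $\frac{M}{M^k}$ and $\frac{M}{M^{k'}}\frac{1}{\abs{K}-1}$, and the cross-class weighting must be matched. Concretely, I will verify that
\[
\frac{a_1}{M}\sum_m \frac{b_2}{b_1}\bw_m^k\cdot g^{\J\to k}_m
\]
equals the ideal discrepancy plus a term independent of the weights. That term is the $s_0\Tl$ contribution, which is identical for every $q\in\mathcal{P}_{\J}$ because $\sum_j q^j=1$. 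The same identity holds with a fixed $q^k$ in place of $\bw^k_m$. I expect this bookkeeping to be the main obstacle. If the averages over other classes are not exactly balanced, I would take the paper's convention that the second average in $\disc^k$ is weighted per class, $\brac{\cdot}_{k'\ne k}$ of class averages, which is exactly what the factors $\frac{M}{M^{k'}}\frac{1}{\abs{K}-1}$ produce. Assumption~\ref{assump delta} ($\delta>0$) ensures the gains lie in a bounded positive range $[0,B]$, with $B\le 2 s_0\Tl/\xi$ by Assumption~\ref{assump xi}. This also justifies the EWA renormalization via the shift $s_0$.

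\textbf{Step 4: apply the regret bound.} Taking the maximum over fixed $q$, the difference between the best fixed $q^k$ and the EWA weights is $\frac{a_1b_1}{b_2M}R_M$. With gains in an interval of length $B$ and $\eta=\frac{\xi}{2\Tl s_0}\sqrt{8\ln\abs{\J}/M}=\sqrt{8\ln\abs{\J}/M}/B$ (the gains carry the factor $\eta$ through the rates $d_k,c_k$), the EWA regret satisfies $R_M\le B\sqrt{M\ln\abs{\J}/2}$. Dividing by $M$ gives $C\ln(\abs{\J})^{1/2}M^{-1/2}$. Combining this with the errors from Steps 1–2 yields the claimed bound, with constants depending only on $a_1,a_2,b_1,b_2,s_0,\alpha,\Tl,\xi$.
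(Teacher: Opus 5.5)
Your proposal follows the paper's proof essentially step for step. Both arguments control the decayed residual $x^k(T^0_m)$, and both replace $w^j(T^0_1)$ and $w^{j\to k}$ by $\frac{b_1}{b_2}$ and $\bw^{j\to k}_m$ via Propositions~\ref{prop bound wj} and~\ref{prop cvg EWA}. Both then use the gain identity under the per-class averaging convention, which is indeed the convention the paper's own computation relies on, and finish with the EWA bound for gains in $[0,2\Tl s_0/\xi]$.

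Two bookkeeping points need tightening.

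\textbf{Residual bound.} Your bound on $x^k(T^1_{m-1})$ covers only one round's production and ignores the residual that $x^k(T^1_{m-1})$ itself carries over from earlier rounds. You need an induction, as in Lemma~\ref{lemma xk}, or equivalently a geometric sum $x^k(T^1_m)\le P/(1-\mathrm{e}^{-a_2\Tr})$, where $P$ is the one-round production bound. This is where the condition $\Tr\ge \ln(2)/a_2$ in the paper's $C_6$ comes from.

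\textbf{Which gains the regret applies to.} The chemistry runs EWA on the actual gains $g^{j\to k}_m$, and these contain $w^j(T^0_1)$ rather than $\frac{b_1}{b_2}$. So the exact identity in your Step~3 holds only for the ideal gains $\Bar{g}^{j\to k}_m$. You must therefore transfer the regret via $\abs{\Bar{R}^k_M - R^k_M}\le 2M\max_{j,m}\abs{g^{j\to k}_m - \Bar{g}^{j\to k}_m}$, exactly as the paper does. After dividing by $M$, this adds only another $O(\mathrm{e}^{-b_2a_0\Tr})$ term.
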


The exact error term with explicit constants is provided in the proof of the proposition in Appendix \ref{sec:proof prop reg}. The specific choice of $\eta$ minimizes the regret bound of the expert aggregation algorithm EWA (see Appendix \ref{appendix EWA} for details).
This proposition states that the species discrepancy of every species $X^k$ exceeds the maximally achievable discrepancy under constant weights, up to an error term with two components. The first component, of order $O(\mathrm{e}^{-C_2 \Tr})$, arises from the approximation error to the true expert aggregation algorithm as derived in proposition \ref{prop cvg EWA}, and from the distance to equilibrium in the output species decay \eqref{eq decay xk}. The second component, of order $O(M^{-1/2})$, comes from the regret bound of the expert aggregation algorithm EWA itself, as detailed in Appendix \ref{appendix EWA}. Consequently, for $\Tr \to \infty$ and $M \to \infty$, the total error tends to zero.

The above results establish that for sufficiently long renormalization duration $\Tr$ and sufficiently many training samples $M$, the species discrepancies  $X^k$ are close to their optimal values. This guarantees that, on average, the concentration of species $X^k$ is higher after the network has been presented with a sample of class $k$. This result, however, is local to the behavior of species $X^k$ during learning. The following theorem provides the global result for the whole network.

\begin{theorem}[Oracle inequality] \label{th oracle}
  Suppose Assumptions \ref{assump delta} (bounded flux) and \ref{assump xi} ($\xi$-balanced) hold, let $\eta = \frac{\xi}{2\Tl s_0} \sqrt{8\frac{\ln(\abs{\J})}{M}}$ and $C_6, C_7$, and $C_8$ be the constants defined in Proposition \ref{prop reg}. Then, if $\Tr \geq  C_6$ we have
  \[
    \Disc \geq \max_{q^K \in (\frac{b_1}{b_2}\mathcal{P}_{\J})^{\abs{K}}}  \Disc(q^K) -C_7\left( \abs{\J} \mathrm{e}^{-C_8\Tr} +  \ln(\abs{\J})^{1/2} M^{-1/2}\right).
  \]
\end{theorem}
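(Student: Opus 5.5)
The plan is to reduce the global statement to the local regret bounds of Proposition~\ref{prop reg} by writing the network discrepancy as an average of species discrepancies. The key point is that both $\Disc$ and $\Disc(q^K)$ are linear combinations of per-class quantities, each involving a single output species $X^k$ (resp.\ a single weight vector $q^k$). The maximum over $q^K\in(\frac{b_1}{b_2}\mathcal{P}_{\J})^{\abs{K}}$ then decouples into independent maxima over each $q^k$, because the CRN consists of local equations for each class.

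\textbf{Step 1 (decomposition).} Expand $\Disc$: the average over triples $(k^\star,k,m)$ with $k\neq k^\star$ and $o_m\in k^\star$ splits into two averages. The first is an average of $x^{k^\star}(T^1_m)$ over samples of their own class. The second is an average of $x^k(T^1_m)$ over samples from foreign classes. Regrouping by the species index, I would show that
\[
\Disc = \sum_{k\in K}\left(\lambda_k \Brac{x^k(T^1_m)}_{m,\,o_m\in k} - \mu_k\Brac{x^k(T^1_m)}_{k'\neq k,\, m,\, o_m\in k'}\right)
\]
with explicit nonnegative weights $\lambda_k,\mu_k$ depending only on $M^k/M$ and $\abs{K}$. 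The averaging convention in $\disc^k$ is presumably chosen so that, after the normalisation built into the gains (the factors $M/M^k$ and $1/(\abs{K}-1)$), one gets $\lambda_k=\mu_k$. Then $\Disc$ is a convex combination $\sum_k \lambda_k \disc^k$ with $\sum_k\lambda_k=1$, or at least a positive multiple of one. The identical algebra gives $\Disc(q^K)=\sum_k \lambda_k \disc^k(q^k)$ with the same weights.

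\textbf{Step 2 (apply the local bound).} For each $k$, apply Proposition~\ref{prop reg} under the same $\eta$ and the same condition $\Tr\geq C_6$. For every $q^k$ this gives $\disc^k\geq \disc^k(q^k)-\varepsilon$, where $\varepsilon=C_7(\abs{\J}\mathrm{e}^{-C_8\Tr}+\ln(\abs{\J})^{1/2}M^{-1/2})$. Multiply by $\lambda_k\geq0$ and sum to get $\Disc\geq \sum_k\lambda_k\disc^k(q^k)-\varepsilon\sum_k\lambda_k$. Since the $q^k$ are arbitrary and independent, taking the supremum over $q^K$ yields the statement, provided $\sum_k\lambda_k\le 1$.

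\textbf{Main obstacle.} The delicate step is the bookkeeping in Step~1. In general, the weights attached to the "own-class" average and the "other-class" average of $x^k$ are not obviously equal, because $\Disc$ averages uniformly over triples while $\disc^k$ averages uniformly over the sample indices. I would first check whether the definitions coincide with equal weights. If they do not, I would instead use the version of Proposition~\ref{prop reg} established inside its proof, namely the regret inequality for the $\frac{M}{M^k}$- and $\frac{1}{\abs{K}-1}$-weighted cumulative gains. That inequality is exactly of the form $\sum_m \text{(weighted } x^k)$, and summing it over $k$ reproduces $\Disc$ up to a factor. If the normalising factor exceeds one, it can be absorbed into $C_7$, but the statement reuses $C_7$ verbatim. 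So I expect the weights to sum to at most one, and I would verify this carefully.
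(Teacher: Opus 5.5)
Your plan coincides with the paper's proof. Exchange the names of $k^\star$ and $k$ in the second half of $\Disc$ to get $\Disc=\brac{\disc^k}_{k\in K}$. The same algebra gives $\Disc(q^K)=\brac{\disc^k(q^k)}_{k\in K}$. Averaging the per-class bound of Proposition~\ref{prop reg} over $k$, which holds for every $q^K$, then yields the theorem with the same constants.

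What is missing is that you leave as an expectation the one computation the proof consists of. It closes with $\lambda_k=\mu_k=1/\abs{K}$, hence $\sum_k\lambda_k=1$ and $C_7$ unchanged, provided every multi-index average is read as a \emph{nested} average. For instance,
\[
\brac{x^k(T^1_m)}_{k'\neq k,\, m,\, o_m\in k'}=\frac{1}{\abs{K}-1}\sum_{k'\neq k}\frac{1}{M^{k'}}\sum_{m,\, o_m\in k'}x^k(T^1_m),
\]
and $\Disc$ is read as an average over $k^\star\in K$, then over $k\neq k^\star$, then over $m$ with $o_m\in k^\star$. This is exactly the convention used in the proof of Proposition~\ref{prop reg}: there the factors $M/M^k$ and $1/(\abs{K}-1)$ in the gains convert $\frac{1}{M}\sum_m$ into such nested averages. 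The paper's index swap relies on this convention without comment.

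Your worry is legitimate under the literal flat reading $\frac{1}{\abs{E}}\sum_{e\in E}$ over the whole index set. There one finds $\lambda_k=M^k/M$ and $\mu_k=(M-M^k)/((\abs{K}-1)M)$, which differ unless all classes have the same size. Your fallback would not rescue that case. Summing the weighted regret inequalities over $k$ gives $\abs{K}$ times the nested $\Disc$. The nested $\Disc$ reweights the contribution of each sample $m$ with $o_m\in k^\star$ by the factor $M/(\abs{K}M^{k^\star})$ relative to the flat one. This is a sample-dependent reweighting, not a constant factor, so it cannot be absorbed into $C_7$. The correct resolution is therefore to fix the nested convention, which is the one under which Proposition~\ref{prop reg} is actually proved. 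Your Step~1 then holds with equality, and Step~2 goes through verbatim.
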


The error term is identical to that in Proposition \ref{prop reg}. The theorem is proven in Appendix \ref{sec:proof th oracle}. It asserts that the network discrepancy of the CRN exceeds the maximally achievable discrepancy under constant weights, up to this error term. The family $q^K = (q^k)_{k\in K}$ achieving this maximum acts as an oracle with global knowledge of the behavior of each output species $X^k$, while the CRN itself is designed so that species associated with different classes do not interact. Consequently, the inequality implies that the average classification performance of our CRN—as quantified by the network discrepancy—is, up to an error term that vanishes as $\Tr \to \infty$ and $M \to \infty$, comparable to that of the oracle.

This theorem can be related to Corollary $12$ of \cite{jaffard2026chani}, which established an analogous result for a SNN. However, the assumptions on the SNN were more restrictive and included  Assumption \ref{assump natures and nb obj} (repetitive training samples). The results for the CRN are less restrictive due to the simpler structure of the selection phase and the fact that input species directly influence the output species through the forward-pass reactions \eqref{eq net}. Achieving the same in a SNN requires the treatment of multiple hidden layers (see Figure \ref{fig:CRN SNN}).

Consequently, the above theorem constitutes a first step toward demonstrating that CRNs can learn in a manner comparable to networks of biological neurons, while relying on a simpler structural framework. Moreover, the proof of the theorem allows us to identify the role each component of the CRN plays (see Section \ref{sec discussion}).
An important remark is that all species associated with a given class $k$ (\ie the species $X^k$, $(H^{j \to k})_{j\in \J}$, and $(W^{j \to k})_{j\in \J}$) evolve independently of species associated with other classes. Proposition \ref{prop reg} is obtained by considering only the dynamics corresponding to a single class, yielding a local regret bound for the species $X^k$. The central contribution of Theorem \ref{th oracle} is to derive a global result for the entire network by aggregating these local bounds. This points at a theoretical understanding of emergence in learning networks, whereby complex, global behavior arises from the interaction of simple components.

Nevertheless, although the proposed CRN is shown to asymptotically attain performance comparable to that of the oracle, this does not guarantee that the oracle itself achieves satisfactory performance. In particular, if the input species $X^i$ do not encode information that is relevant to the classification task, the oracle's performance will necessarily be poor. For example, in the context of a gene regulatory network designed to discriminate among different cell states, if the set of input species $I$ consists of transcription factors entirely unrelated to those states, the learning problem becomes unsolvable regardless of the values of the weight concentrations. In the following subsection, we therefore analyze the asymptotic network performance under more restrictive assumptions and provide explicit conditions under which the classification task is solvable by the CRN.

\subsection{Asymptotic analysis} \label{sec asymp analysis}

Under Assumption \ref{assump natures and nb obj} (repetitive training samples), we can extend the previous analysis by computing explicit asymptotic weight concentrations. Note that Assumption \ref{assump natures and nb obj} implies Assumption \ref{assump xi} ($\xi$-balanced) with $\xi =\frac{1}{\abs{\Obj}}$.

\begin{theorem}[Asymptotic output weight concentrations] \label{theo lim wk}
  Suppose Assumptions \ref{assump natures and nb obj} (repetitive training samples) and \ref{assump delta} (bounded flux) hold, and let $C_6$ be the constant defined in Proposition \ref{prop reg}. For $k\in K$, let $\J^k := \arg\max_{j\in \J} \FDisc^{j\to k}$ and define the family $\bw^k\in \frac{b_1}{b_2}\mathcal{P}_{\J}$ by
  \[
    \forall j\in \J, \quad \bw^{j\to k} := \frac{b_1}{b_2\abs{\J^k}} \mathbb{1}_{j\in \J^k}\, .
  \]
  Then, there exist positive constants $C_9,C_{10}$ and $C_{11}$, independent of $\abs{I}, \abs{\J}, \abs{K}$, $\Tr$, and $M$, such if $\Tr\geq C_6$ then for every $k\in K$, at the end of the learning phase the weight concentrations satisfy
  \[
    \Norm{w^k(T^2_M) - \bw^k}_2 \leq C_9 \left( \left(\abs{\J}\ln(\abs{\J})M\right)^{1/2}  \mathrm{e}^{-C_{10} \Tr} + \abs{\J}^{3/2} \mathrm{e}^{-C_{11} \ln(\abs{\J})^{1/2} M^{1/2}}\right).
  \]
  Therefore, for $M\to \infty$ and $\frac{\Tr}{\ln(M)}\to \infty$, we have $w^k(T^2_M) \to \bw^k$.
\end{theorem}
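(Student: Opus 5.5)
We split $w^k(T^2_M) - \bw^k$ by the triangle inequality into the chemical approximation error $w^k(T^2_M) - \bw^k_{M+1}$ and the algorithmic error $\bw^k_{M+1} - \bw^k$. Here $\bw^k_{M+1}$ is the idealized EWA weight family defined before Proposition~\ref{prop cvg EWA}.

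\textbf{Chemical term.} Proposition~\ref{prop cvg EWA} gives, coordinatewise, the bound $C_4\mathrm{e}^{-C_5\abs{\J}\Tr}$. Summing over $\J$ in $\ell_2$ costs a factor $\abs{\J}^{1/2}$. The extra factor $(\ln(\abs{\J})M)^{1/2}$ in the statement signals that I should redo this estimate with $\eta$ of order $(\ln\abs{\J}/M)^{1/2}$. The simplest way would be to keep track of how the constants of Proposition~\ref{prop cvg EWA} depend on the size of the gains $h^{j\to k}(T^1_M)=\exp(\eta G^{j\to k}_M)$. Alternatively, I would take the error of Proposition~\ref{prop reg}/Appendix proofs with this $\eta$ and absorb the resulting polynomial prefactors into $\mathrm{e}^{-C_{10}\Tr}$, using $\Tr\ge C_6$. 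I expect the prefactor to be at most polynomial in $\abs{\J}$, $\ln\abs{\J}$ and $M$, so this step is bookkeeping.

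\textbf{Algorithmic term.} Under Assumption~\ref{assump natures and nb obj}, every type $o$ is presented equally often, say $M/\abs{\Obj}$ times. So in $G^{j\to k}_M$ the input flux enters only through $\int_o\Phi^j$, and $M^k/M$ equals the fraction of types in class $k$. I would expand $G^{j\to k}_M=\sum_m g^{j\to k}_m$ and group the rounds by class. The positive gains contribute $M\,w^j(T^1_0)\brac{\int_o\Phi^j}_{o\in k}$. The negative gains, after the normalizations $M/M^{k'}$ and $1/(\abs{K}-1)$, contribute $-M\,w^j(T^1_0)\brac{\int_o\Phi^j}_{k'\neq k,\,o\in k'}$. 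The $s_0\Tl$ terms are independent of $j$ and cancel in the softmax. This gives
\[
G^{j\to k}_M = M\, w^j(T^1_0)\,\FDisc^{j\to k} + c_k ,
\]
with $c_k$ independent of $j$.

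The remaining obstacle is that $w^j(T^1_0)$ depends on $j$. By Proposition~\ref{prop bound wj} it differs from $b_1/b_2$ by at most $C_1\mathrm{e}^{-C_2\Tr}$. This perturbs the exponent by at most $\eta M\alpha\Tl\, C_1\mathrm{e}^{-C_2\Tr}$, which is of order $(\ln(\abs{\J})M)^{1/2}\mathrm{e}^{-C_2\Tr}$. A softmax is Lipschitz in its logits, so this perturbation feeds into the first error term. It is in fact the natural source of the $(\ln\abs{\J}\, M)^{1/2}$ factor.

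\textbf{Softmax concentration and the main obstacle.} With the idealized logits $\eta M\frac{b_1}{b_2}\FDisc^{j\to k}$, the softmax places on each $j\notin\J^k$ a weight of at most $\mathrm{e}^{-\eta M \frac{b_1}{b_2}\Delta_k}$. Here $\Delta_k>0$ is the gap between $\max_j\FDisc^{j\to k}$ and the second largest value. For $j\in\J^k$, the weight differs from $1/\abs{\J^k}$ by at most $\abs{\J}$ times that quantity. Taking the $\ell_2$ norm over $\abs{\J}$ coordinates gives the factor $\abs{\J}^{3/2}$. Since $\eta M=\frac{\xi}{2\Tl s_0}\sqrt{8\ln(\abs{\J})M}$, this is $\abs{\J}^{3/2}\mathrm{e}^{-C_{11}\ln(\abs{\J})^{1/2}M^{1/2}}$.

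The main obstacle is making $C_{11}$ independent of $\abs{\J}$ and $M$. This requires the gap $\Delta_k$, which depends on the fixed data $(\int_o\Phi^j)$, to be treated as a fixed constant. I would state that explicitly. The final limit follows because both terms vanish when $M\to\infty$ and $\Tr/\ln M\to\infty$.
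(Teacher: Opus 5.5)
Your proposal follows the paper's proof. The paper splits $w^k(T^2_M)-\bw^k$ into the same three pieces, using the intermediate family $\bbw^k_{M+1}$ built from the idealized gains: the chemical error $w^k(T^2_M)-\bw^k_{M+1}$ via Proposition~\ref{prop cvg EWA}; the perturbation from $w^j(T^1_0)\neq b_1/b_2$ via the Lipschitz property of softmax (Proposition~\ref{prop iaf}), which is indeed where the $(\abs{\J}\ln(\abs{\J})M)^{1/2}$ factor comes from; and the softmax concentration on $\J^k$ via Proposition~\ref{prop prelim EWA}, with the gap $\Delta^k$ implicitly treated as a fixed constant, exactly as you flag. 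The one misstep is your first paragraph: the chemical term needs no re-derivation, because $\Tr\ge C_6\ge C_3$ lets you apply Proposition~\ref{prop cvg EWA} as stated, giving $\abs{\J}^{1/2}C_4\mathrm{e}^{-C_5\Tr}$, which the first error term already dominates, whereas your alternative of absorbing polynomial factors in $M$ into $\mathrm{e}^{-C_{10}\Tr}$ using only $\Tr\ge C_6$ would not be legitimate.
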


The exact form of the error term with explicit constants is given in the proof of the theorem in Appendix \ref{sec:proof theo lim wk}. This theorem provides the asymptotic behavior of the output weight concentrations at the end of the learning phase. In the limit, the concentration weight $\bw^{j \to k}$ is nonzero only for those sets $j$ that achieve the maximal flux discrepancy $\FDisc^{j \to k}$, and the concentrations are identical for all such maximizing sets.
Consequently, in the asymptotic regime, the forward-pass equations \eqref{eq net} imply that the species $X^k$ is produced exclusively through reactions involving sets $j$ of input species that exhibit higher-than-average flux when samples of class $k$ are presented. During inference, the weight concentrations are fixed to their values at the end of the learning phase (see Section \ref{sec inference}). The above result therefore characterizes the network’s inference behavior for large $M$ and $\Tr/\ln(M)$.

The error term has two components. The first part, in $O(M^{1/2} \mathrm{e}^{-O(\Tr)})$, comes from approximations in the two renormalization networks. The second part, in $O(\exp(-O(M^{1/2}))$, comes from the specific expert-aggregation algorithm, EWA.

  This theorem can be related to Theorem $14$ of \cite{jaffard2026chani}, which establishes a similar asymptotic analysis for the output synaptic weights of a SNN. However, the present framework is more general, and the above error bounds are tighter, owing to the simplifications enabled by the use of a CRN.

  Even if the limit output-weight concentration family $(\bw^k)_{k\in K}$ has a clear and relevant interpretation from a machine-learning point of view, one cannot know for sure whether it can solve the task, since the set of input species could code for features that are not relevant for the classification task. To finally conclude on the network's asymptotic performance, the more refined analysis in the following subsection closes this gap.

  \subsubsection{When classes are defined by feature correlations}

  We start by stating some additional assumptions that we will need.

  \begin{assumption}[Binary flux] \label{assump bin flux}
    There exist a threshold $\theta>0$ and a constant $p>0$ such that for all $j\in \J$ and $o\in \Obj$, $\int_o \Phi^j \in \{0,p\}$. If $\int_o \Phi^j = p$ we say that item $o$ has features $j$. Besides, for $j\in \J$, let $\Obj^j:= \{o\in \Obj, \int_o \Phi^i = p\}$ the set of item types with features $j$. Then, all $\Obj^j$ have the same cardinality.
  \end{assumption}

  The first part of this assumption is fulfilled if, for instance, the concentrations $x^i$ are either zero or equal to a common function $x$, such that each flux is either $0$ or $\int_o x^n$. The second part is a technical assumption that we make to facilitate the derivations.

  \begin{assumption}[Class decomposition] \label{assump class dec}
    The threshold $\theta$ in Assumption \ref{assump bin flux} (binary flux) is such that for every $k\in K$, there exists a set $E^k\subset \J$ such that $k = \bigcup_{j\in E^k} \Obj^j$.
  \end{assumption}

  This assumption implies that the classes to which samples are assigned are determined by correlations between features of the set $I$, which are encoded in the set $\J$. More precisely, a sample type $o$ belongs to class $k$ if there exists a subset $j\in E^k$ of features such that $o$ has all features belonging to this subset.
  Consequently, the complexity of these correlations increases with the cardinality $n$ of the subsets $j$: as the depth of the CRN increases, each set $\Obj^j$ represents progressively more complex patterns among the samples.

  \begin{theorem} \label{th equiv}
    Suppose Assumption \ref{assump bin flux} (binary flux) holds. Then the following statements are equivalent:

    $(i)$ Assumption \ref{assump class dec} (class decomposition) is satisfied.

    $(ii)$ There exists an optimal weight family.

    $(iii)$ The limit weight family defined in Theorem \ref{theo lim wk} is an optimal weight family.
  \end{theorem}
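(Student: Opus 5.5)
The plan is to prove the cycle $(iii)\Rightarrow(ii)\Rightarrow(i)\Rightarrow(iii)$. Throughout, Assumption \ref{assump bin flux} (binary flux) turns every quantity $\sum_{j\in\J} q^{j\to k}\int_o\Phi^j$ into $p\sum_{j\in\J:\, o\in\Obj^j} q^{j\to k}$. Since weights are nonnegative, this quantity is positive if and only if $o\in\Obj^j$ for some $j$ in the support of $q^k$. So everything reduces to set combinatorics on the family $(\Obj^j)_{j\in\J}$. I take the classes $k\in K$ to be disjoint subsets of $\Obj$ that partition it.

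\textbf{Step 1: $(iii)\Rightarrow(ii)$.} This is immediate. The family $(\bw^k)_{k\in K}$ from Theorem \ref{theo lim wk} is a legitimate element of $(\frac{b_1}{b_2}\mathcal{P}_{\J})^{\abs{K}}$, so if it is optimal then an optimal family exists.

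\textbf{Step 2: $(ii)\Rightarrow(i)$.} Let $q^K$ be optimal and set $E^k:=\{j\in\J:\ q^{j\to k}>0\}$. This set is nonempty because $q^k$ has total mass $b_1/b_2$. By the remark above, $o\in k$ iff $\sum_j q^{j\to k}\int_o\Phi^j>0$ iff $o\in\bigcup_{j\in E^k}\Obj^j$. Hence $k=\bigcup_{j\in E^k}\Obj^j$, which is exactly Assumption \ref{assump class dec}.

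\textbf{Step 3: $(i)\Rightarrow(iii)$.} This is the core step. Let $N$ be the common cardinality of the $\Obj^j$. Reading the bracket in Definition \ref{def feat disc} as a uniform average over the item types in the respective sets, I get
\[
\FDisc^{j\to k}=p\left(\frac{\abs{\Obj^j\cap k}}{\abs{k}}-\frac{\abs{\Obj^j\setminus k}}{\abs{\Obj\setminus k}}\right)\leq \frac{pN}{\abs{k}},
\]
with equality if and only if $\Obj^j\subset k$. If $\Obj\setminus k=\emptyset$, the second term is absent and the same conclusion holds trivially.

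Under (i), every $j\in E^k$ satisfies $\Obj^j\subset k$, so the upper bound is attained. Therefore
\[
\J^k=\arg\max_{j\in\J}\FDisc^{j\to k}=\{j\in\J:\ \Obj^j\subset k\}\supseteq E^k .
\]
It follows that
\[
k=\bigcup_{j\in E^k}\Obj^j\subseteq\bigcup_{j\in\J^k}\Obj^j\subseteq k ,
\]
so $k=\bigcup_{j\in\J^k}\Obj^j$. Since $\bw^k$ is uniform on $\J^k$, the quantity $\sum_j\bw^{j\to k}\int_o\Phi^j$ is positive iff $o\in\bigcup_{j\in\J^k}\Obj^j=k$. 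Thus $(\bw^k)_{k\in K}$ is an optimal family.

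\textbf{Main obstacle.} The delicate point is Step 3. It requires pinning down the exact averaging convention in $\FDisc$. It also requires using the equal-cardinality clause of Assumption \ref{assump bin flux}, so that the maximum over $j$ is achieved exactly by the sets with $\Obj^j\subset k$. Without equal cardinalities, a set $j$ with larger $\abs{\Obj^j\cap k}$ but some leakage outside $k$ could beat a pure one, and the argument would fail. I would therefore verify this extremal characterization carefully, including the degenerate case $\abs{K}=1$.
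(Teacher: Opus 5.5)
Your proof is correct and follows the paper's own route. It uses the same cycle of implications, the same support-set argument $E^k=\{j\in\J:\ q^{j\to k}>0\}$ for $(ii)\Rightarrow(i)$, and, for $(i)\Rightarrow(iii)$, the same use of the common cardinality of the $\Obj^j$ to show that $\FDisc^{j\to k}$ is maximized exactly by the sets with $\Obj^j\subset k$; your sandwich $E^k\subseteq\J^k=\{j\in\J:\ \Obj^j\subset k\}$ replaces the paper's choice of a maximal $E^k$, and your concern about the averaging convention is harmless because the subtracted term in $\FDisc^{j\to k}$ is nonnegative and vanishes when $\Obj^j\subset k$ under either reading.
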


  This theorem is proven in Appendix \ref{sec:proof th equiv}. It establishes that, provided the classes can be expressed as combinations of correlations among $n$ features encoded by the input species, the CRN learns to correctly classify all sample types. Furthermore, one or several optimal families of weights exists, and the CRN's output weight concentrations converge to one of them. This learning guarantee is analogous to the Perceptron Learning Theorem \citep{rosenblatt1962principles}, which asserts that if there exists a set of weights capable of classifying the data---equivalently, if the data are linearly separable---then the perceptron learning algorithm will converge to such a solution.

  This result can be related to Theorem $19$ of \cite{jaffard2026chani}. However, Assumption \ref{assump class dec} (class decomposition) applies in a broader range of settings than its SNN counterpart, as it does not restrict correlations to combinations of features with cardinalities that are powers of two. Here, any combination is admissible.

  If Assumption \ref{assump class dec} is not satisfied, the depth $n$ of the network can be increased, which increases the complexity of the feature correlations encoded by the sets $j\in \J$. Then, the assumption will eventually be satisfied and the CRN will be able to solve the task. The is reminiscent of the infinite-depth limit for ANNs \citep{hayou2023infinitedepthlimitfinitewidthneural}.

  Under Assumption \ref{assump bin flux} (binary flux), we can also compute the Vapnik - Chervonenkis dimension of the CRN. For a given classification algorithm, the VC-dimension is defined as the size of the largest set of points that it can shatter. This quantity provides an upper bound on the complexity of the class of functions that can be learned. It notably appears in bounds on the generalization error of machine-learning algorithms (see for instance \citet[Corollary $3.19$]{mohri2018foundations}).

  We model the samples to be classified as bit strings $o=(o_i)_{i\in I}$, where $o_i=1$ if $o$ has feature $i$ and $o_i = 0$ otherwise. These strings belong to a space of dimension $\abs{I}$. A set $j\in \J$ of input species has a positive flux integral $\int_o\Phi^j$ if $o$ has all features of the set $j$. In this case, we say that $j$ is activated by $o$.

  We consider a binary-output classifier and define the hypothesis set $\h:={\mathbb{1}_{F} \text{ such that } F\subset \J}$, where for a sample $o$, $\mathbb{1}_{F}(o) = 1$ if and only if there exists $j\in F$ activated by $o$, and $0$ otherwise. This hypothesis set corresponds to the class of functions that the CRN can learn. We then obtain the following result as proven in Appendix \ref{sec:proof prop vcdim}.

  \begin{proposition} \label{prop vcdim}
    The VC-dimension of the hypothesis set $\h$ is the size of the selected input species subsets,
    \[\operatorname{VCdim}(\h) = \abs{\J}\, .\]
  \end{proposition}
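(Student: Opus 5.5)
The plan is to prove the two inequalities $\operatorname{VCdim}(\h)\geq \abs{\J}$ and $\operatorname{VCdim}(\h)\leq\abs{\J}$ separately. The lower bound comes from an explicit shattered set built from the selected subsets themselves. The upper bound comes from counting the hypotheses in $\h$.

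\textbf{Lower bound.} For each $j\in\J$ I will take the bit string $o^{(j)}\in\{0,1\}^{\abs{I}}$ defined by $o^{(j)}_i=1$ if and only if $i\in j$. These $\abs{\J}$ strings are pairwise distinct, because distinct $j$ give distinct indicator vectors.

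The key observation is that a set $j'\in\J$ is activated by $o^{(j)}$ if and only if $j'\subseteq j$. Since $\J\subset J_n$, all sets have the same cardinality $n$, so $j'\subseteq j$ forces $j'=j$. Hence $o^{(j)}$ activates exactly one selected set, namely $j$.

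Now take an arbitrary labeling $\ell:\{o^{(j)}\}_{j\in\J}\to\{0,1\}$ and set $F:=\{j\in\J : \ell(o^{(j)})=1\}$. Then $\mathbb{1}_F(o^{(j)})=1$ if and only if some $j'\in F$ is activated by $o^{(j)}$, which by the observation happens if and only if $j\in F$, i.e.\ $\ell(o^{(j)})=1$. The labeling with all zeros is covered by $F=\emptyset$. So every labeling is realized, the set $\{o^{(j)}\}_{j\in\J}$ is shattered, and $\operatorname{VCdim}(\h)\geq\abs{\J}$.

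\textbf{Upper bound.} Every hypothesis in $\h$ is of the form $\mathbb{1}_F$ with $F\subset\J$, so $\h$ contains at most $2^{\abs{\J}}$ distinct functions. Shattering a set of $d$ points requires $2^d$ distinct restrictions of hypotheses to that set. Since $2^d\le\abs{\h}\le 2^{\abs{\J}}$, we get $d\le\abs{\J}$, hence $\operatorname{VCdim}(\h)\le\abs{\J}$.

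Combining the two bounds gives the equality. I do not expect a genuine obstacle here. The only point requiring care is the step that makes the lower bound work: the uniform size $n$ of the sets in $\J$, which guarantees that the indicator of $j$ activates no other selected set. Without equal cardinalities, a set strictly containing another would break the shattering construction. It is therefore worth stating explicitly that the argument uses $\J\subset J_n$.
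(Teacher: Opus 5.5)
Your proof is correct. The lower bound uses the same construction as the paper, the indicator strings of the selected sets. You also make explicit a step the paper leaves implicit: $o^{(j)}$ activates no other $j'\in\J$ precisely because all selected sets have the common size $n$.

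The upper bound is where you take a genuinely different route. You use the generic finite-class bound $\operatorname{VCdim}(\h)\le\log_2\abs{\h}\le\abs{\J}$, which needs nothing beyond the fact that hypotheses are indexed by subsets of $\J$. The paper instead builds an injection from a shattered set $E$ into $\J$. It realizes the labeling in which only $o\in E$ is positive, which yields some $F$ and some $j_o\in F$ activated by $o$ but by no other point of $E$. The map $o\mapsto j_o$ is then injective.

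Your argument is shorter and does not depend on how activation works. The paper's argument exploits the OR-structure of the hypotheses and uses only the $\abs{E}$ labelings with exactly one positive point. It therefore bounds the size of any set on which all such labelings are realizable, which is a weaker requirement than full shattering. It also exhibits, for each point, a selected subset that only that point activates.
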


  The VC-dimension of the CRN, which quantifies the complexity of the classes that the CRN can learn, is equal to the size of the set of selected input species subsets $\abs{\J}$. However, some error terms in the approximation results above grow with $\abs{\J}$, and the number of reactions and species in the CRN grows linearly with $\abs{\J}$ (see Section \ref{sec net complexity}). This reflects the typical trade-off in supervised learning algorithms between expressiveness and efficiency.

  \subsection{The roles of different network components} \label{sec discussion}

  We use the theoretical results above to introspect the role of each component of the proposed CRN. At the core of the network are the forward-pass reactions~\eqref{eq net}. They describe how input species and weight species, acting as catalysts, induce the production of output species. The rate equation of species $X^k$ (see equation \eqref{eq rate xk}) resembles the computations performed across multiple layers in a SNN as shown in Figure \ref{fig:CRN SNN}. The mass-action rate equations of this network part (see Appendix \ref{appendix rate eq}) are key to the regret bounds for the expert aggregation algorithm (see Appendix \ref{appendix EWA}), from which Theorem \ref{th oracle} is derived. They are also essential for analyzing the network behavior in the regime of asymptotic weights (Theorem \ref{th equiv}).

  The second ingredient is the EWA network \eqref{eq crn EWA}, in which gain-function species $H^{j\to k}$ are produced with the input species acting as catalysts. Together with the renormalization network \eqref{eq renorm wk}, the gain-function species modify the weight concentrations $w^{j\to k}$, feeding back into the forward-pass reactions \eqref{eq net}. This makes it possible to compute the asymptotic weight concentrations (Theorem \ref{theo lim wk}), admitting a meaningful interpretation of the learning task. This asymptotic analysis enables the derivation of Theorem \ref{th oracle}. However, alternative reactions could be considered in place of the EWA network \eqref{eq crn EWA}: in principle, reactions implementing {\em any} expert-aggregation algorithm for which a regret bound can be established could be used to derive Theorem \ref{th oracle}, and it is likely that other algorithms also yield meaningful asymptotic behavior.

  As discussed in Section \ref{sec av learn}, a remarkable fact about the reactions of the learning phase is that species related to different classes $k$ do not directly interact with each other. Yet, we obtain global results about the network behavior. This provides a potential explanation for how complex, system-level behavior emerges from interactions among simple cellular components.

  The remaining reactions, particularly in the selection phase, play a less central role in the analysis. The selection network \eqref{eq Wj} is mainly required for complexity control. Without selection (\ie with a threshold $\theta = 0$), the set $\J$ would have cardinality $\binom{\abs{I}}{n}$, leading to a prohibitively large number of species and reactions (see Section \ref{sec net complexity} for details on the network complexity).
  The two renormalization networks \eqref{eq renorm wj} and \eqref{eq renorm wk} enforce mass conservation. This guarantees a fixed value for the concentrations $w^j$ and a fixed sum $\sum_{j\in \J} w^{j\to k}$ for each $k\in K$ at equilibrium, such that the weights are a partition of unity. This conservation constraint is required for expert aggregation algorithms, which necessarily provide probability distributions.

  \section{Numerical results} \label{sec num res}

  We demonstrate the function of the presented CRN in a simple benchmark classification task.
  Specifically, we train the CRN on the handwritten digits data set provided by {\tt scikit-learn}. This data set consists of 1797 digital images of 64 (8$\times$8) pixels each (see Figure \ref{fig:digits} for examples), which we randomly divide into a training set (80\% of the images, \ie $1437$) and a test set (20\% of the images, \ie $360$). Among the training set, $40$ images are used for the selection phase and $1397$ for the learning phase. The set of features $I$ corresponds to the pixels: each pixel $i\in I$ is represented by an input species $X^i$ with constant concentration during the presentation of the $m^\text{th}$ image, $x^i = ( y^i_m + \xi^i_m)_+$.
  Here, $y^i_m\in [0,1]$ is the gray level of pixel $i$ in the $m^\text{th}$ image, and $\xi^i_m$ is a Gaussian variable with mean $0$ and variance $\sigma^2>0$. The pixel-wise independent Gaussian noise is added to the raw pixel intensities in order to simulate degraded data quality. The positive part is then taken to ensure that all intensities are non-negative.

  Since we evaluate the network accuracy on the test set, we use the simplified network architecture described in the last paragraph of Section \ref{sec net complexity}. This means that the output weights are renormalized only once, after learning, and the output species are produced exclusively during the testing phase.

  \begin{figure}
    \centering
    \includegraphics[width=0.8\linewidth]{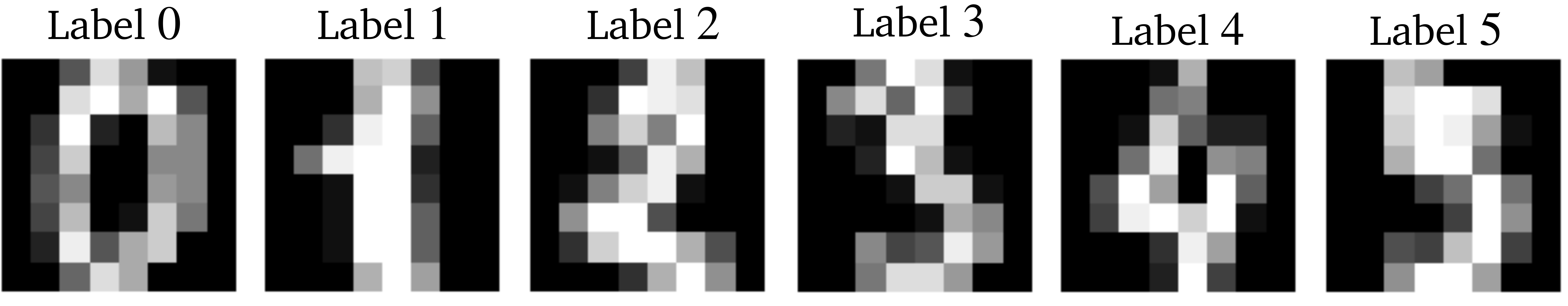}
    \caption{Example images from the handwritten digits data set. Each image shows a digit (ground-truth labels above the images) in 8$\times$8 grayscale pixels.}
    \label{fig:digits}
  \end{figure}

  We quantify performance as a function of network complexity by controlling network complexity as follows: during the selection phase, instead of fixing a threshold $\theta$ and selecting the fluxes exceeding it, we fix the number of selected fluxes $\abs{\J}$ and retain the $\abs{\J}$ largest ones. The two approaches are equivalent, as fixing $\abs{\J}$ implicitly defines a threshold $\theta$ that separates the fluxes in the same way.

  \begin{figure}
    \centering
    \includegraphics[width=1\linewidth]{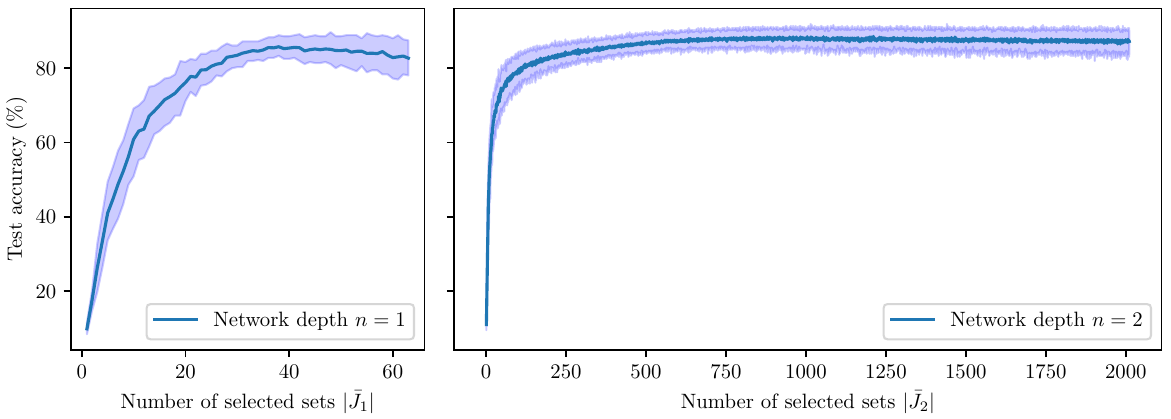}
    \caption{Numerical results on the handwritten digits data set. We plot the CRN's classification accuracy on the test set as a function of network complexity, quantified by the number of selected input sets $\abs{\bar{J}_n}$ during the selection phase. We show results for network depths $n=1$ (no hidden layer, left panel) and $n=2$ (one hidden layer, right panel). The shaded bands are the 10\% confidence intervals over 100 independent repetitions of each configuration. All rate constants are equal to 1, except $c_k$ and $d_k$ for $k\in K$, which are equal to $\frac{10}{9}\eta$ and $10\eta$, respectively. The initial species concentrations are $A=1$ and $S=3$,
    and the noise variance is set to $\sigma^2=0.00001$. For network depth $n=1$, we chose $\eta = 0.0005$ and for network depth $n=2$, we chose $\eta = 0.0001$. }
    \label{fig:accuracy}
  \end{figure}

  The results for network depths $n=1$ and $n=2$ are presented in Figure \ref{fig:accuracy}. The curves represent the classification accuracy on the test set as a function of the network complexity $\abs{\J}$. The shaded bands show the 10\% confidence intervals over 100 independent repetitions of each configuration. The sources of randomness are the Gaussian noise introduced in the encoding of the input species and the random split between the training and test sets, independently for each network complexity.

  For $n=1$, the output species receive information from individual input species, each encoding a single pixel. Since there are 64 input species, $\abs{\bar{J}_1}$ ranges from 1 to 64. The maximum accuracy, $85.8\%$, is achieved for $\abs{\bar{J}_1} = 37$ selected sets of single-input species. Beyond this value, the accuracy decreases slightly.

  For $n=2$, each output species receives information from a pair of input species, corresponding to a pair of pixels. There are 2016 unique pairs, so $\abs{\bar{J}_2}$ ranges from 1 to 2016. The accuracy reaches its maximal value of $88.6\%$ for $\abs{\bar{J}_2} = 1210$ selected pairs of input species. To match the maximum accuracy of the CRN with $n=1$, the CRN with $n=2$ requires $\abs{\bar{J}_2} = 382$, which is a higher complexity than the best CRN with $n=1$.

  These results indicate that the network with depth $n=1$ offers the better cost-performance ratio, but increasing the depth achieves higher absolute performance. The case $n=1$ also reveals that, on average, only 37 of the pixels provide relevant information.
  These observations differ substantially from the numerical experiments reported by \cite{jaffard2026chani}, Section 5.2, where a SNN was trained on the same task. There, without hidden layers (corresponding to $n=1$), the reported test accuracy was only 53\%. With one hidden layer (corresponding to $n=2$), the accuracy increased with the number of selected neurons until saturating at 83.5\%. Thus, the CRN with $n=1$ performs significantly better than a SNN without hidden layer, and even outperforms the SNN with one hidden layer. The performance of the CRN with $n=2$ also surpasses the SNN with one hidden layer. In summary, the simplest CRN ($n=1$ with slightly more than half of the pixels encoded) outperforms both SNN configurations from \cite{jaffard2026chani}, at substantially lower complexity. This is consistent with our theoretical results, which show that a CRN without hidden layers can achieve similar learning guarantees as a SNN with hidden layers.

  \section{Conclusion}

  We introduced a chemical reaction network without hidden layers that can provably learn a classification task more efficiently than a spiking neural network with hidden layers in a similar problem setting. We derived local regret bounds that characterize the learning dynamics of the output chemical species encoding the classes, arising from the implementation of an expert-aggregation algorithm within the CRN. We then established an oracle inequality describing the global behavior of the network, demonstrating that it solves the classification task on average under non-restrictive assumptions.

  In the more restricted setting where training samples are presented repeatedly, corresponding to epochs in the training of artificial neural networks,
  we further analyzed the asymptotic behavior of the CRN and characterized the classes of functions it is capable of learning. If the input features are actually relevant to the task, we explicitly gave the  Vapnik--Chervonenkis (VC) dimension of the network. Finally, we implemented the proposed network and evaluated its performance on the classification of handwritten digits. To the best of our knowledge, the CRN presented here is the first CRN for which learning behavior has been mathematically proven.

  The architecture of the proposed CRN is inspired by a spiking neural network (SNN) for which analogous theoretical guarantees have previously been derived~\citep{jaffard2026chani}. By establishing comparable results in a CRN, we demonstrated that CRNs can encode and process information just as neuronal networks can. What's more, our analysis indicated that CRNs may be more powerful learning machines than SNNs. Indeed, the presented CRN achieves the same learning guarantees as a SNN under weaker assumptions and for lower network complexity. This was confirmed in our numerical experiments. It suggests that CRNs without hidden layers can solve learning tasks for which SNNs require hidden layers. This might be due to the fact that multiplication is intrinsic in the physics of mass-action kinetics, whereas it needs to be approximated by sequences of nonlinearly weighted summations in a neural network.

  Future extensions of this work could generalize the results to a broader class of CRNs. This could then enable finding actual biochemical examples of learning networks in biological cells by screening for the defining network features of that CRN class. The discussion of the role of each reaction module in Section \ref{sec discussion} provides an initial step in this direction. It could eventually reveal structural or functional similarities with certain biochemical processes and enable their reinterpretation as learning machines within cells.

  \subsection*{Supplementary information}
  The code used to produce the numerical results can be found at:

  \noindent \href{https://github.com/SophieJaffard/CRN}{https://github.com/SophieJaffard/CRN}.

  \subsection*{Acknowledgments}

  S.J. was financially supported by a postdoctoral fellowship from the ELBE Postdoc Program of the Center for Systems Biology Dresden.

  \subsection*{Statements and declarations}
  The authors have no competing interests to declare that are relevant to the content of this article.

  \newpage

  \section{Appendix}

  \subsection{Details on the regret bound of EWA} \label{appendix EWA}

  The regret bound $R_M$ for the EWA algorithm was given by \citet{cesa2006prediction} 
  for losses (\ie negative gains) in $[0,1]$. It was later generalized by \citet{stoltz2010agregation} to losses in the interval $[a,b]$ for any $a<b\in \mathbb{R}$:
  \[R_M \leq \frac{\ln(\abs{E})}{\eta}+ \eta \frac{(b-a)^2}{8}M \, ,  \]
  where $\abs{E}$ is the number of experts.
  With $\eta = \frac{1}{b-a}\sqrt{8\ln(\abs{E})/M}$, we obtain
  \begin{equation} \label{eq regret EWA}
    R_M \leq (b-a)\sqrt{\frac{M}{2}\ln(\abs{E})} \, .
  \end{equation}
  This choice of $\eta$ assumes that we know the time horizon $M$ in advance. If this is not the case, we can use a time-dependent learning rate $\eta_m = \frac{1}{b-a}\sqrt{8\ln(\abs{I})/m}$, which gives the bound
  \[R_M \leq \sqrt{2M\ln(\abs{E})} + \sqrt{\frac{\ln(\abs{E})}{8}}\]
  of the same order of magnitude.

  \subsection{Proofs}\label{sec:proofs}

  In the following proofs, we denote by $a_0 >0$ the initial concentration of species $A$ and by $h_0 >0$ the initial concentration of species $H^{j\to k}$. Note that the concentration of species $A$ does not evolve and stays equal to $a_0$. For  $j\in \J$,  we write $x^j := w^j\Phi^j$ and $\Bar{x}^{j} := \frac{b_1}{b_2}\Phi^j $.
  We start by stating the rate equations of the species in the CRN.

  \subsubsection{Rate equations} \label{appendix rate eq}

  We recall that we start from $x^k(T^0_1)=0$ for all $k$. During the time interval $[T^0_m, T^1_m]$, the concentrations of the species $W^{j\to k}$ stay constant. The concentrations $x^k$ and
  $h^{j\to k}$ for $j\in \J$ and $k\in K$ are governed by the following rate equations:
  \begin{align}
    \forall k\in K, \quad & \frac{dx^k}{dt} = a_1\sum_{j\in \J} x^j(t)w^{j\to k}(T^0_m) \label{eq rate xk}\\
    \forall j\in \J, \quad & \frac{dh^{j\to k^\star}}{dt} = d_{k^\star}( x^j(t) + s_0) h^{j\to k^\star}(t) \\
    \forall j\in \J, k\neq k^\star, \quad & \frac{dh^{j \to k}}{dt} = c_k (s_0 - x^j(t)) h^{j\to k^\star}(t) \label{eq h}
  \end{align}

  Therefore, for $t\in [T^0_m,T^1_m]$,
  \begin{align}
    \forall k\in K, \quad & x^k(t) = x^k(T^0_m) + a_1\int_{T^0_m}^{t}  \sum_{j\in \J} x^j(u)w^{j\to k}(T^0_m) du \label{eq xk} \\
    \forall j\in \J, \quad &  h^{j\to k^\star}(t) = h^{j\to k^\star}(T^0_m) \exp\left(d_{k^\star}\int_{T^0_m}^t (x^j(u) + s_0)\, \mathrm{d}u \right) \label{eq h1} \\
    \forall j\in \J, k\neq k^\star, \quad & h^{j\to k}(t) = h^{j\to k}(T^0_m) \exp\left(c_{k}\int_{T^0_m}^t (s_0 - x^j(u))\, \mathrm{d}u \right) . \label{eq h2}
  \end{align}

  During the time interval $[T^1_m, T^2_m]$, the concentrations $h^{j\to k}$ stay constant, and the concentrations $x^k$ and $w^{j\to k}$ for $j\in \J$ and $k\in K$ are governed by the rate equations:
  \begin{align*}
    \forall k\in K, \quad & \frac{dx^k}{dt} = - a_2x^k(t) \\
    \forall j\in \J, k\in K, \quad & \frac{dw^{j\to k}}{dt} = b_1 h^{j\to k}(T^1_m) - b_2 w^{j\to k}(t) \sum_{j'\in \J} h^{j'\to k}(T^1_m)\, .
  \end{align*}
  Let $\Bar{w}^{j\to k}_{m+1} := \frac{b_1}{b_2}\frac{h^{j\to k}(T^1_m)}{\sum_{j'\in I}h^{j'\to k}(T^1_m)} $. Then, according to Lemma \ref{lemma sol renorm net}, we have
  \begin{align}
    \forall k\in K, \quad & x^k(t) = x^k(T^1_m)\mathrm{e}^{- a_2(t - T^1_m)} \label{eq xk 2} \\
    \forall j\in \J, k\in K, \quad & w^{j\to k}(t)= \Bar{w}^{j\to k}_{m+1} + (w^{j\to k}(T^1_m) - \Bar{w}^{j\to k}_{m+1})\mathrm{e}^{-b_2 (t - T^1_m) \sum_{j'\in I}h^{j'\to k}(T^1_m) }. \label{eq ode w}
  \end{align}

  \subsubsection{Preliminary propositions}

  As proven in \cite[Proposition 4]{gao2018properties}, we have:

  \begin{proposition} \label{prop iaf}
    Let $d\in \mathbb{N}$, $\eta\in \mathbb{R}$. Then, the softmax function $f$ with temperature $\eta$ defined on $\mathbb{R}^d$ by $x \mapsto \softmax(\eta x)$ is $\eta$-Lipschitz.
  \end{proposition}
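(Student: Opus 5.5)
The plan is to prove the Lipschitz bound through the Jacobian and a mean-value argument on the convex domain $\mathbb{R}^d$. We use the Euclidean norm throughout, and we read the constant as $\abs{\eta}$ (it is $\eta$ in the relevant case $\eta>0$). Write $\sigma(z) := \softmax(z)$, so that $f(x) = \sigma(\eta x)$. By the chain rule, $Df(x) = \eta\, D\sigma(\eta x)$. For any $x,y\in\mathbb{R}^d$, the fundamental theorem of calculus along the segment $t\mapsto y + t(x-y)$ gives
\[
  f(x)-f(y) = \int_0^1 Df\bigl(y+t(x-y)\bigr)(x-y)\,\mathrm{d}t ,
\]
hence $\norm{f(x)-f(y)}_2 \leq \abs{\eta}\, \sup_{z\in\mathbb{R}^d} \norm{D\sigma(z)}_{2}\, \norm{x-y}_2$. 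Everything therefore reduces to showing that $\norm{D\sigma(z)}_2\leq 1$ uniformly in $z$.

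Next, compute the Jacobian explicitly. Set $p=\sigma(z)$, so $p_i>0$ and $\sum_i p_i=1$. Differentiating $p_i = \mathrm{e}^{z_i}/\sum_l \mathrm{e}^{z_l}$ gives $\partial p_i/\partial z_l = p_i\mathbb{1}_{i=l} - p_ip_l$, that is,
\[
  D\sigma(z) = \operatorname{diag}(p) - pp^\top .
\]
This matrix is symmetric, so its operator norm equals the largest absolute value of its quadratic form over unit vectors.

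The key step is a two-sided bound on the quadratic form $v^\top(\operatorname{diag}(p)-pp^\top)v = \sum_i p_i v_i^2 - \bigl(\sum_i p_i v_i\bigr)^2$. This is the variance of $v$ under the distribution $p$.
\begin{itemize}
  \item It is nonnegative, by Jensen's inequality or Cauchy--Schwarz with weights $p_i$. So the matrix is positive semidefinite.
  \item It is at most $\sum_i p_i v_i^2 \leq \max_i p_i\, \norm{v}_2^2 \leq \norm{v}_2^2$.
\end{itemize}
Hence all eigenvalues lie in $[0,1]$, so $\norm{D\sigma(z)}_2\leq 1$. (A sharper argument would even give $1/2$, but that is not needed.) Combining this with the first step yields the claimed $\eta$-Lipschitz bound.

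I do not expect a real obstacle. The only point needing care is bounding the operator norm of $\operatorname{diag}(p)-pp^\top$ uniformly in $z$. The variance interpretation settles it directly, and it avoids having to compute eigenvalues explicitly.
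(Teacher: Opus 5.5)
Your proof is correct, and reading the constant as $\lvert\eta\rvert$ is the right interpretation, since the statement as written only makes sense for $\eta\geq 0$. The paper gives no argument of its own and defers to \cite[Proposition~4]{gao2018properties}. That proof is essentially yours: the Jacobian $\eta(\operatorname{diag}(p)-pp^\top)$, viewed there as the Hessian of log-sum-exp, is bounded by $\eta I$ via the same variance computation, and the uniform bound then yields the Lipschitz estimate, which you make explicit with the integral mean-value formula.
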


  As proven in \cite[Proposition 26]{jaffard2026chani}, we have

  \begin{proposition} \label{prop prelim EWA}
    Let $d\in \mathbb{N}$, $A^1,\dots, A^d\in \mathbb{R}$. Let $E$ be the subset of $[d]$ defined by $E:= \arg\max_i A^i$, and let
    $w_i(M) := \frac{\exp(\sqrt{M}A^i)}{\sum_{l=1}^d \exp(\sqrt{M}A^l)}$. Let $A=\max_i A^i$.
    \begin{itemize}
      \item If $E=[d]$ then for every $i\in [d]$, $w_i(M) = \frac{1}{d}.$
      \item If $E\subsetneq [d]$, let $\Delta := \max_i A^i - \max_{i\notin E} A^i$. Then, for all $i$,
        \[\Abs{w_i(M) - \frac{1}{\abs{E}}\mathbb{1}_{i\in E}} \leq \frac{1}{\abs{E}}\max\Big(1,\frac{d-\abs{E}}{\abs{E}}\Big) \exp(-\sqrt{M}\Delta).\]
    \end{itemize}
  \end{proposition}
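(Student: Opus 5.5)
The proof works directly from the explicit softmax formula. Everything is normalised by the largest score $A=\max_i A^i$. Dividing the numerator and denominator of $w_i(M)$ by $\exp(\sqrt{M}A)$ gives
\[
w_i(M)=\frac{\exp\big(-\sqrt{M}(A-A^i)\big)}{\abs{E}+S_M},\qquad S_M:=\sum_{l\notin E}\exp\big(-\sqrt{M}(A-A^l)\big),
\]
because every $l\in E$ contributes exactly $1$ to the denominator.

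The first case is immediate. If $E=[d]$, all $A^i$ are equal to $A$, so $S_M=0$ and every numerator equals $1$. Hence $w_i(M)=1/d$.

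For the second case, $E\subsetneq[d]$, the definition of $\Delta$ gives $A-A^l\ge\Delta>0$ for every $l\notin E$. Consequently $0\le S_M\le (d-\abs{E})\exp(-\sqrt{M}\Delta)$. I split according to whether $i$ lies in $E$.

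\emph{Case $i\notin E$.} The target is $0$. Using $S_M\ge 0$ and $A-A^i\ge\Delta$,
\[
0\le w_i(M)\le \frac{\exp(-\sqrt{M}\Delta)}{\abs{E}}.
\]
This is at most the claimed bound, since the max factor is at least $1$.

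\emph{Case $i\in E$.} The numerator is $1$, so
\[
\frac{1}{\abs{E}}-w_i(M)=\frac{S_M}{\abs{E}(\abs{E}+S_M)}\in\Big[0,\frac{S_M}{\abs{E}^2}\Big].
\]
Inserting the bound on $S_M$ gives
\[
\Abs{w_i(M)-\frac{1}{\abs{E}}}\le \frac{1}{\abs{E}}\cdot\frac{d-\abs{E}}{\abs{E}}\exp(-\sqrt{M}\Delta),
\]
which is again dominated by the stated bound.

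Combining the two cases yields the stated inequality with the factor $\frac{1}{\abs{E}}\max\big(1,\frac{d-\abs{E}}{\abs{E}}\big)$. The only step needing care is recognising that the maximisers contribute exactly $\abs{E}$ to the normalised denominator. This is what makes the $i\in E$ case produce the ratio $(d-\abs{E})/\abs{E}$ and the $i\notin E$ case produce the factor $1$. I expect no genuine obstacle; the argument is elementary once the normalisation by $\exp(\sqrt{M}A)$ is made.
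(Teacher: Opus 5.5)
Your proof is correct and complete. Normalising by $\exp(\sqrt{M}A)$ makes each maximiser contribute exactly $1$ to the denominator, and the remainder satisfies $S_M\le (d-\abs{E})\,\mathrm{e}^{-\sqrt{M}\Delta}$. Treating $i\notin E$ and $i\in E$ separately then yields exactly the two branches of the $\max$: $\frac{1}{\abs{E}}$ for the former and $\frac{d-\abs{E}}{\abs{E}^2}$ for the latter.

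The paper gives no proof of its own here; it imports the statement from \cite[Proposition 26]{jaffard2026chani}. Your elementary derivation is therefore a self-contained substitute. It also shows the stated bound is slightly loose, since each case needs only one branch of the $\max$.
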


  \begin{lemma} \label{lemma sol renorm net}
    For $n\in \N$, consider species $(A_l)_{1\leq l \leq n}$ and $(B_l)_{1\leq l \leq n}$ whose concentrations evolve on $\R_+$ according to the chemical reaction equations
    \begin{align*}
      \forall l\in [n] \quad & A^l \xlongrightarrow{c_1}  A^l + B^l \\
      \forall l,l'\in [n], \quad &  A^{l'} + B^l \xlongrightarrow{c_2} A^{l'} .
    \end{align*}
    Then, the concentrations $(a^l)_{l\in [n]}$ stay constant and the concentrations $(b^l)_{l\in [n]}$ satisfy
    for all $l\in [n]$ and $t\geq 0$:
    \[
      b^l(t) = \frac{c_1}{c_2}\frac{a^l(0)}{\sum_{l'=1}^n a^{l'}(0)} + \left(b^l(0) - \frac{c_1}{c_2}\frac{a^l(0)}{\sum_{l'=1}^n a^{l'}(0)}\right)  \exp\left(-c_2 t\sum_{l'=1}^n a^{l'}(0)\right).
    \]
  \end{lemma}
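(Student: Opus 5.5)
The statement is Lemma~\ref{lemma sol renorm net}, and I would prove it by writing down the mass-action rate equations and observing that they decouple into independent scalar linear ODEs.

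\textbf{Step 1: the $a^l$ are constant.} Each species $A^l$ appears with the same stoichiometric coefficient on both sides of every reaction in which it takes part. It therefore acts purely as a catalyst, so $\frac{da^l}{dt}=0$ and $a^l(t)=a^l(0)$ for all $t\geq 0$.

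\textbf{Step 2: the equation for $b^l$.} The only reaction producing $B^l$ is $A^l \to A^l+B^l$, which contributes $c_1 a^l(0)$. The reactions consuming $B^l$ are $A^{l'}+B^l\to A^{l'}$ for all $l'\in[n]$, contributing $-c_2 a^{l'}(0)b^l$ each. Writing $\sigma:=\sum_{l'=1}^n a^{l'}(0)$, this gives
\[
\frac{db^l}{dt} = c_1 a^l(0) - c_2 \sigma\, b^l(t).
\]
This is a scalar affine ODE with constant coefficients, and it is uncoupled from the other $b^{l'}$.

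\textbf{Step 3: solve the ODE.}
\begin{itemize}
\item If $\sigma>0$, the equilibrium is $b^l_\infty=\frac{c_1}{c_2}\frac{a^l(0)}{\sigma}$. Set $u:=b^l-b^l_\infty$. Then $u'=-c_2\sigma u$, so $u(t)=u(0)\mathrm{e}^{-c_2\sigma t}$, which is exactly the claimed formula.
\item If $\sigma=0$, all the $a^l(0)$ vanish because concentrations are nonnegative. Then $b^l$ is constant and the formula is not needed, or holds under the convention $0/0:=0$.
\end{itemize}
Existence and uniqueness follow from linearity. Nonnegativity is preserved: the explicit solution is a convex combination of $b^l(0)\geq 0$ and $b^l_\infty\geq 0$, with weights $\mathrm{e}^{-c_2\sigma t}$ and $1-\mathrm{e}^{-c_2\sigma t}$.

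\textbf{Main obstacle.} The only delicate point is bookkeeping: the degradation of $B^l$ is catalysed by \emph{every} $A^{l'}$, not only $A^l$. This is why the rate involves the total $\sigma$, and it produces both the normalization $\sum_l b^l_\infty=\frac{c_1}{c_2}$ and the common decay rate $c_2\sigma$.
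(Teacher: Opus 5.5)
Your proof is correct and follows the paper's argument: the $a^l$ are constant because the $A^l$ act only as catalysts, each $b^l$ satisfies the decoupled affine ODE $\frac{db^l}{dt} = c_1 a^l(0) - c_2 b^l \sum_{l'=1}^n a^{l'}(0)$, and subtracting the constant particular solution reduces it to a homogeneous linear equation with exponential decay. Your handling of the degenerate case $\sum_{l'} a^{l'}(0)=0$ and your check that nonnegativity is preserved are small additions that the paper does not include.
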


  \begin{proof}
    For every $l\in [n]$, the concentration $a^l$ stays constant, whereas the concentration $b^l$ follows the rate equation
    \[
      \frac{db^l}{dt} = c_1 a^l(0) - c_2 b^l \sum_{l'=1}^n a^{l'}(0)\, .
    \]
    A particular solution to this equation is $b^l_P : t \mapsto \frac{c_1}{c_2}\frac{a^l(0)}{\sum_{l'=1}^n a^{l'}(0)}$, such that the difference $b^l - b^l_P$ solves the homogeneous equation
    \[
      \frac{d(b^l - b^l_P)}{dt} = - c_2 (b^l - b^l_P) \sum_{l'=1}^n a^{l'}(0)\, .
    \]
    Therefore, we have
    \[
      (b^l-b^l_P)(t) = (b^l - b^l_P)(0) \exp\left(-c_2 t\sum_{l'=1}^n a^{l'}(0)\right) ,
    \]
    \ie
    \[
      b^l(t) = \frac{c_1}{c_2}\frac{a^l(0)}{\sum_{l'=1}^n a^{l'}(0)} + \left(b^l(0) - \frac{c_1}{c_2}\frac{a^l(0)}{\sum_{l'=1}^n a^{l'}(0)}\right)  \exp\left(-c_2 t\sum_{l'=1}^n a^{l'}(0)\right).
    \]
  \end{proof}

  \subsubsection{Proof of Proposition \ref{prop bound wj}}\label{sec:proof prop bound wj}
  According to Lemma \ref{lemma sol renorm net}, we have
  \[
    w^j(T^0_1) = \frac{b_1}{b_2} + \left( w^j(\Tsel) - \frac{b_1}{b_2}\right) \mathrm{e}^{-b_2 a_0 (T^0_1 - \Tsel) }.
  \]
  Furthermore, in $[0, \Tsel]$ the concentrations $w^j$ fulfill the rate equation $\frac{{d}w^j}{{d}t} =f(\Phi^j(t))$,
  so $w^j(\Tsel) = \int_0^{\Tsel}f(\Phi^j(t)) \,\mathrm{d}t$. Therefore, we have $
  0 \leq w^j(\Tsel) \leq \norm{f}_\infty \Tsel $
  and get
  \[
    \AAbs{w^j(T^0_1) - \frac{b_1}{b_2}} \leq  \max\left\{\frac{b_1}{b_2},\, \norm{f}_{\infty} \Tsel\right\} \mathrm{e}^{-b_2 a_0 \Tr }.
  \]

  \subsubsection{Proof of Proposition \ref{prop cvg EWA}}\label{sec:proof prop cvg EWA}

  We start by proving the following proposition, which ensures that the rate of equation \eqref{eq h} stays positive.

  \begin{proposition} \label{prop rate h pos}
    Suppose $\Tr \geq \ln\left(\frac{b_2\norm{f}_\infty \Tsel}{b_1}\right) (b_2a_0)^{-1}$ and Assumption \ref{assump delta} (bounded flux) holds. Then, for all $t\in [T^1_0,T^2_M]$ and $j\in \J$, $s_0 - x^j(t) \geq \delta$.
  \end{proposition}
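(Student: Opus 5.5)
The bound $s_0 - x^j(t)\ge\delta$ will follow from an upper bound $w^j(t)\le 2\frac{b_1}{b_2}$, valid for all $t\in[T^1_0,T^2_M]$, combined with Assumption \ref{assump delta}. Indeed, recall $x^j = w^j\Phi^j$. If $w^j(t)\le 2\frac{b_1}{b_2}$ and $\Phi^j(t)\le\alpha$, then
\[
s_0 - x^j(t) \geq s_0 - 2\frac{b_1}{b_2}\alpha = \delta .
\]
So the whole task reduces to controlling $w^j$ on the learning interval.

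The first step is to observe that $w^j$ is constant on $[T^0_1, T^2_M]$. During the learning phase, $W^j$ appears in the forward-pass reactions \eqref{eq net} and in the EWA network \eqref{eq crn EWA}, but always as a catalyst, on both sides with the same stoichiometry. It is not involved in the renormalization network \eqref{eq renorm wk} or in the decay \eqref{eq decay xk}. Hence $w^j(t)=w^j(T^0_1)$ for all $t$ in the learning phase, which is the range in question; $T^1_0$ is to be read as $T^0_1$.

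The second step bounds $w^j(T^0_1)$ using the proof of Proposition \ref{prop bound wj}. There we have the exact formula
\[
w^j(T^0_1)=\frac{b_1}{b_2}+\Big(w^j(\Tsel)-\frac{b_1}{b_2}\Big)\mathrm{e}^{-b_2a_0\Tr},
\]
together with $0\le w^j(\Tsel)\le\norm{f}_\infty\Tsel$. This gives
\[
w^j(T^0_1)\le \frac{b_1}{b_2}+\norm{f}_\infty\Tsel\,\mathrm{e}^{-b_2a_0\Tr}.
\]
The hypothesis $\Tr\ge \ln\!\big(\frac{b_2\norm{f}_\infty\Tsel}{b_1}\big)(b_2a_0)^{-1}$ is equivalent to $\norm{f}_\infty\Tsel\,\mathrm{e}^{-b_2a_0\Tr}\le \frac{b_1}{b_2}$. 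Substituting, $w^j(T^0_1)\le 2\frac{b_1}{b_2}$. If the logarithm is negative, the hypothesis is vacuous, but then $\norm{f}_\infty\Tsel<\frac{b_1}{b_2}$ and the same inequality holds trivially.

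The only point requiring care is the first step: verifying from the rate equations in Appendix \ref{appendix rate eq} that no reaction active after $T^0_1$ changes $w^j$. The rest is direct substitution.
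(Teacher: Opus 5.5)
Your proposal is correct and follows essentially the same route as the paper. You bound $w^j(T^0_1)\le 2\frac{b_1}{b_2}$ using the explicit renormalization solution from the proof of Proposition~\ref{prop bound wj} together with the condition on $\Tr$, use that $w^j$ is frozen during learning because $W^j$ acts only as a catalyst, and conclude with Assumption~\ref{assump delta}. Your direct estimate $w^j(T^0_1)\le\frac{b_1}{b_2}+\norm{f}_\infty\Tsel\,\mathrm{e}^{-b_2a_0\Tr}$ is slightly cleaner than the paper's maximum of two terms, and you explicitly check the catalytic role of $W^j$, which the paper only asserts.
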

  \begin{proof}
    According to the proof of Proposition \ref{prop bound wj}, we have
    \[
      w^j(T^1_0) \leq \frac{b_1}{b_2} + \max\left\{\frac{b_1}{b_2} \mathrm{e}^{-b_2 a_0 \Tr}, \norm{f}_\infty \Tsel  \mathrm{e}^{-b_2 a_0 \Tr} \right\}.
    \]
    Besides, $\frac{b_1}{b_2} \mathrm{e}^{-b_2 a_0 \Tr} \leq \frac{b_1}{b_2}$, and the condition on $\Tr$ implies that
    \[\norm{f}_\infty \Tsel  \mathrm{e}^{-b_2 a_0 \Tr} \leq \frac{b_1}{b_2}\]
    as well, so we have $w^j(T^1_0) \leq 2\frac{b_1}{b_2}$. Thus, according to Assumption \ref{assump delta}, for $t\in [T^1_0, T^2_M]$ and $j\in \J$, since the concentration $w^j$ does not evolve during learning, we have
    \begin{align*}
      x^j(t) &= w^j(T^1_0)\Phi^j(t) \leq 2\frac{b_1}{b_2} \alpha
    \end{align*}
    and therefore $s_0 - x^j(t) \geq \delta$.
  \end{proof}

  With this result, we can now prove Proposition \ref{prop cvg EWA}. Let
  \[C_3 := \max\left\{\ln\left(\frac{b_2}{b_1}\norm{f}_\infty \Tsel\right)(b_2a(0))^{-1},\, \ln(2) (b_2 h_0)^{-1}\right\}.\]
  Let $j\in \J$, $k\in K$, and $m\in [M]$. According to Eq.~\eqref{eq ode w}, we have
  \[w^{j\to k}(T^2_m)= \Bar{w}^{j\to k}_{m+1} + (w^{j\to k}(T^1_m) - \Bar{w}^{j\to k}_{m+1})\mathrm{e}^{-b_2 \Tr \sum_{j'\in \J}h^{j'\to k}(T^1_m) }\, .\]
  In addition, according to Eqs.~\eqref{eq h1} and \eqref{eq h2}, since $d_k$ and $c_k$ are lower-bounded by $\frac{\eta}{\abs{K}-1}$ and since $s_0 - x^j$ is lower-bounded by $\delta$ on $[T^0_m, T^1_m]$, for $j'\in \J$ we have
  \begin{align*}
    h^{j'\to k}(T^1_m) \geq h_0 \exp\left(\frac{\eta}{\abs{K}-1} m \delta \Tl\right).
  \end{align*}
  Therefore, we have
  \begin{equation*}
    \mathrm{e}^{-b_2 \Tr \sum_{j'\in \J}h^{j'\to k}(T^1_m) } \leq  \mathrm{e}^{-b_2 \Tr \abs{\J} h_0 \exp\left(\frac{\eta}{\abs{K}-1} m \delta \Tl\right)} ,
  \end{equation*}
  which implies
  \begin{equation}  \label{eq min w}
    \abs{ w^{j\to k}(T^2_m) - \Bar{w}^{j\to k}_{m+1}} \leq  \abs{w^{j\to k}(T^1_m) - \Bar{w}^{j\to k}_{m+1}} \mathrm{e}^{-b_2 \Tr \abs{\J} h_0 \exp\left(\frac{\eta}{\abs{K}-1} m \delta \Tl\right)} .
  \end{equation}

  We prove the following equation by recursion on $m$:
  \begin{equation} \label{eq recursion}
    \forall m\in [M], \quad \abs{w^{j\to k}(T^2_m) - \Bar{w}^{j\to k}_{m+1}} \leq \min\left\{2\frac{b_1}{b_2},   4 \frac{b_1}{b_2} \mathrm{e}^{-b_2 \Tr \abs{\J} h_0 \exp\left(\frac{\eta}{\abs{K}-1} m \delta \Tl\right)}\right\}.
  \end{equation}

  \begin{itemize}
    \item Case $m=1$: According to Eq.~\eqref{eq min w}, we have
      \begin{align*}
        \abs{w^{j\to k}(T^2_1) - \Bar{w}^{j\to k}_2} &\leq  \abs{w^{j\to k}(T^1_1) - \Bar{w}^{j\to k}_2}\mathrm{e}^{-b_2 \Tr \abs{\J} h_0 \exp\left(\frac{\eta}{\abs{K}-1}  \delta \Tl\right)}.
      \end{align*}
      Since the initial concentration of $W^{j\to k}$ is $\frac{b_1}{b_2 \abs{\J}}$, and since it does not evolve during $[T^0_1, T^1_1]$, both $w^{j\to k}(T^1_1)$ and $\Bar{w}^{i\to k}_2$ are bounded by $\frac{b_1}{b_2}$, so we have
      \begin{align*}
        \abs{w^{j\to k}(T^2_1) - \Bar{w}^{j\to k}_2} &\leq 2\frac{b_1}{b_2}\mathrm{e}^{-b_2 \Tr\abs{\J}h_0 \exp(\eta (\abs{K}-1)^{-1} \delta \Tl)}.
      \end{align*}
      Since the exponential term is less than $1$ we also have $\abs{w^{i\to k}(T^2_1) - \Bar{w}^{i\to k}_2}\leq 2\frac{b_1}{b_2}$, so Eq.~\eqref{eq recursion} is true for $m=1$.

    \item Case $m>1$: Suppose Eq.~\eqref{eq recursion} holds for $m-1$. According to Eq.~\eqref{eq min w}, we have
      \begin{align*}
        &\abs{w^{j\to k}(T^2_m) - \Bar{w}^{j\to k}_{m+1}} \\
        &\leq  \abs{w^{j\to k}(T^1_m) - \Bar{w}^{j\to k}_{m+1}}\mathrm{e}^{-b_2 \Tr \abs{\J} h_0 \exp\left(\frac{\eta}{\abs{K}-1} m \delta \Tl\right)}\\
        &\leq ( \abs{w^{j\to k}(T^1_m) - \Bar{w}^{j\to k}_{m}} + \abs{\Bar{w}^{j\to k}_m - \Bar{w}^{j\to k}_{m+1}} )\mathrm{e}^{-b_2 \Tr \abs{\J} h_0 \exp\left(\frac{\eta}{\abs{K}-1} m \delta \Tl\right)} .
      \end{align*}
      Since Eq.~\eqref{eq recursion} holds for $m-1$, and since $w^{j\to k}(T^1_m) = w^{j\to k}(T^2_{m-1})$, the terms $\abs{w^{j\to k}(T^1_m) - \Bar{w}^{j\to k}_{m}}$ and $ \abs{\Bar{w}^{j\to k}_m - \Bar{w}^{j\to k}_{m+1}}$ are each bounded by $2\frac{b_1}{b_2}$, because by definition all weights $(\Bar{w}^{j\to k}_l)_{l\leq 0}$ are bounded by $\frac{b_1}{b_2}$. Therefore
      \begin{align*}
        \abs{w^{i\to k}(T^2_m) - \Bar{w}^{i\to k}_{m+1}} &\leq 4 \frac{b_1}{b_2} \mathrm{e}^{-b_2 \Tr \abs{\J} h_0 \exp\left(\frac{\eta}{\abs{K}-1} m \delta \Tl\right)}.
      \end{align*}
      Furthermore, the condition on $\Tr$ imposes that \[2 \mathrm{e}^{-b_2 \Tr \abs{\J} h_0 \exp\left(\frac{\eta}{\abs{K}-1} m \delta \Tl\right)} \leq 1,\]
      so we have as well $ \abs{w^{j\to k}(T^2_m) - \Bar{w}^{j\to k}_{m+1}} \leq 2\frac{b_1}{b_2}$.
  \end{itemize}

  This proves that Eq.~\eqref{eq recursion} is true for every $m\in [M]$, which implies Proposition \ref{prop cvg EWA} by lower-bounding $\exp\left(\frac{\eta}{\abs{K}-1} m \delta \Tl\right)$ by $1$.

  \subsubsection{Proof of Proposition \ref{prop reg}}\label{sec:proof prop reg}
  Let $C_6:= \max\left\{\ln\left(\frac{b_2}{b_1}\norm{f}_\infty \Tsel\right)(b_2a(0))^{-1},\, \ln(2) (b_2 h_0)^{-1}, \ln(2)a_2^{-1}\right\}$.

  \noindent Let $E(\Tr):= 4 \frac{b_1}{b_2} \mathrm{e}^{-b_2 \Tr\abs{\J}h_0}$ and $A :=2 a_1  \Tl s_0(\frac{b_1}{b_2} + \abs{\J} E(\Tr))$. We prove the following lemma by recursion:

  \begin{lemma} \label{lemma xk}
    Let $k\in K$. Suppose the assumptions of Proposition \ref{prop reg} hold. Then, for all $m\in [M]$,
    \begin{align}
      &\Abs{x^k(T^1_m) - \int_{T^0_m}^{T^1_m} \sum_{j\in \J} \Bar{w}^{j\to k}_m x^j(t) dt} \leq  A\mathrm{e}^{-a_2\Tr} +a_1 \Tl s_0 \abs{\J} E(\Tr) \label{eq recursion 2.1}
    \end{align}
    and
    \begin{align}
      &x^k(T^1_m) \leq A. \label{eq recursion 2.2}
    \end{align}
  \end{lemma}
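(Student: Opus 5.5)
We argue by induction on $m$, proving both inequalities \eqref{eq recursion 2.1} and \eqref{eq recursion 2.2} together. By \eqref{eq xk}, the output concentration at the end of round $m$ splits as
\[
x^k(T^1_m) = x^k(T^0_m) + a_1\int_{T^0_m}^{T^1_m}\sum_{j\in\J} w^{j\to k}(T^0_m)\,x^j(t)\,\mathrm{d}t .
\]
The two error terms in \eqref{eq recursion 2.1} will come from two separate sources. The residual $x^k(T^0_m)$ left over from the previous round produces $A\mathrm{e}^{-a_2\Tr}$. The mismatch between $w^{j\to k}(T^0_m)$ and $\Bar{w}^{j\to k}_m$ produces $a_1\Tl s_0\abs{\J}E(\Tr)$. (The integral in \eqref{eq recursion 2.1} should presumably carry the factor $a_1$; we track it throughout.)

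For the weight mismatch, note that $w^{j\to k}(T^0_m)=w^{j\to k}(T^2_{m-1})$ when $m\geq 2$. Proposition \ref{prop cvg EWA}, in its sharper form \eqref{eq recursion} with the inner exponential bounded below by $1$, then gives $\abs{w^{j\to k}(T^0_m)-\Bar{w}^{j\to k}_m}\leq E(\Tr)$. For $m=1$ the two quantities are equal, since both are $\frac{b_1}{b_2\abs{\J}}$. The bound $C_6\geq C_3$ ensures that Proposition \ref{prop cvg EWA} and Proposition \ref{prop rate h pos} both apply. Proposition \ref{prop rate h pos} gives $0\leq x^j(t)\leq s_0-\delta\leq s_0$. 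Combining these,
\[
a_1\int\sum_j\abs{w^{j\to k}(T^0_m)-\Bar{w}^{j\to k}_m}\,x^j \leq a_1\Tl s_0\abs{\J}E(\Tr).
\]
The same bounds, together with $\sum_j\Bar{w}^{j\to k}_m=\frac{b_1}{b_2}$, also give
\[
a_1\int\sum_j w^{j\to k}(T^0_m)\,x^j\leq a_1\Tl s_0\left(\tfrac{b_1}{b_2}+\abs{\J}E(\Tr)\right)=\tfrac{A}{2}.
\]

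For the residual, in the base case $x^k(T^0_1)=0$. For $m\geq2$, \eqref{eq xk 2} gives $x^k(T^0_m)=x^k(T^1_{m-1})\mathrm{e}^{-a_2\Tr}$, and the induction hypothesis \eqref{eq recursion 2.2} bounds this by $A\mathrm{e}^{-a_2\Tr}$. Both terms are nonnegative, so the triangle inequality yields \eqref{eq recursion 2.1} immediately.

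The main obstacle is closing \eqref{eq recursion 2.2} without the constant growing with $m$. From the bounds above, $x^k(T^1_m)\leq A\mathrm{e}^{-a_2\Tr}+A/2$. Since $C_6\geq\ln(2)a_2^{-1}$, we have $\mathrm{e}^{-a_2\Tr}\leq 1/2$, so $x^k(T^1_m)\leq A$. This is exactly why $A$ carries the factor $2$ and why $C_6$ includes the term $\ln(2)a_2^{-1}$. The only other care needed is to check that $E(\Tr)$ is the correct uniform-in-$m$ bound from \eqref{eq recursion}, including at the index shift between $T^2_{m-1}$ and $\Bar{w}_m$.
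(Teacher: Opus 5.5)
Your proof is correct and follows the paper's argument step for step. It uses the same induction on $m$ and the same split of $x^k(T^1_m)$ via \eqref{eq xk} into the decayed residual $x^k(T^1_{m-1})\mathrm{e}^{-a_2\Tr}$ and the weight-mismatch term (controlled by \eqref{eq recursion} with the inner exponential bounded by $1$), and it closes with the same step $x^k(T^1_m)\leq A/2 + A\mathrm{e}^{-a_2\Tr}\leq A$ using $\Tr\geq C_6\geq \ln(2)a_2^{-1}$. You are also right that the integral in \eqref{eq recursion 2.1} should carry the factor $a_1$, as it does in the paper's own proof.
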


  \begin{proof}
    \begin{itemize}
      \item Case $m=1$: Since $x^k(T^0_1)=0$ according to Eq.~\eqref{eq xk}, and since the initial concentration of $W^{j\to k}$ is equal to $\Bar{w}^{j\to k}_1$ and does not change during $[T^0_1, T^1_1]$, we have
        \begin{align*}
          x^k(T^1_1) &= a_1\int_{T^0_1}^{T^1_1} \sum_{j\in \J} \Bar{w}^{j\to k}_1 x^j(t) \, \mathrm{d}t\, .
        \end{align*}
        Moreover, $x^k(T^1_1) \leq a_1 \Tl s_0 \frac{b_1}{b_2} \leq A$ since $ \sum_{j\in \J} \Bar{w}^{j\to k}_1 = \frac{b_1}{b_2}$ and since $x^j$ is bounded by $s_0$. Therefore Eqs.~\eqref{eq recursion 2.1} and \eqref{eq recursion 2.2} are true for $m=1$.

      \item Case $m>1$: Suppose Eqs.~\eqref{eq recursion 2.1} and \eqref{eq recursion 2.2} are true for $m-1$. According to Eq.~\eqref{eq xk} and the proof of Proposition \ref{prop cvg EWA}, we have
        \begin{align*}
          & \Abs{x^k(T^1_m) - a_1\int_{T^0_m}^{T^1_m} \sum_{j\in \J} \Bar{w}^{j\to k} x^j(t) \,\mathrm{d}t} \\
          &\leq x^k(T^0_m) + a_1\Abs{ \int_{T^0_m}^{T^1_m} \sum_{j\in \J} (\Bar{w}^{j\to k} - w^{j\to k}) x^j(t) \,\mathrm{d}t} \\
          &\leq x^k(T^0_m) + a_1 \Tl s_0 \abs{\J} E(\Tr)\, .
        \end{align*}
        Besides, according to Eq.~\eqref{eq xk 2} and since $x^k(T^0_m) = x^k(T^2_{m-1})$, we have
        \begin{align*}
          x^k(T^0_m) &=  x^k(T^1_{m-1})\mathrm{e}^{-a_2\Tr}.
        \end{align*}
        Since Eq.~\eqref{eq recursion 2.2} is true for $m-1$, we have
        \begin{align*}
          \Abs{x^k(T^1_m) - a_1\int_{T^0_m}^{T^1_m} \sum_{i\in I} \Bar{w}^{i\to k} x^i(t) \,\mathrm{d}t} &\leq A\mathrm{e}^{-a_2\Tr} + a_1 \Tl s_0 \abs{\J} E(\Tr).
        \end{align*}
        Furthermore,
        \begin{align*}
          x^k(T^1_m)  &\leq a_1\int_{T^0_m}^{T^1_m} \sum_{j\in \J} \Bar{w}^{j\to k} x^j(t) \,\mathrm{d}t + A\mathrm{e}^{-a_2\Tr} +  a_1 \Tl s_0 \abs{\J} E(\Tr) \\
          &\leq a_1 \Tl s_0 \frac{b_1}{b_2} + A\mathrm{e}^{-a_2\Tr} +a_1 \Tl s_0 \abs{\J} E(\Tr) \\
          &\leq \frac{A}{2} + A\mathrm{e}^{-a_2\Tr}.
        \end{align*}
    \end{itemize}

    The condition on $\Tr$ implies that $\mathrm{e}^{-a_2\Tr} \leq \frac{1}{2}$ so Eqs.~\eqref{eq recursion 2.1} and \eqref{eq recursion 2.2} are true at rank $m$ as well.
  \end{proof}

  Therefore, according to Lemma \ref{lemma xk}, for $m\in [M]$ and $k\in K$ we have
  \begin{align*}
    &\Abs{x^k(T^1_m) - a_1\int_{T^0_m}^{T^1_m} \sum_{j\in \J} \Bar{w}^{j\to k}_m \Bar{x}^j(t) \,\mathrm{d}t} \\
    &\leq \Abs{x^k(T^1_m) - a_1\int_{T^0_m}^{T^1_m} \sum_{j\in \J} \Bar{w}^{j\to k}_m x^j(t) \,\mathrm{d}t} + \Abs{ a_1\int_{T^0_m}^{T^1_m} \sum_{j\in \J} \Bar{w}^{j\to k}_m (\Bar{x}^j(t) - x^j(t)) \,\mathrm{d}t } \\
    & \leq  A\mathrm{e}^{-a_2\Tr} +a_1 \Tl s_0 \abs{\J} E(\Tr) + a_1 \Tl \sup_{j\in \J, t\in [T^0_m, T^1_m]} \abs{x^j(t) - \Bar{x}^j(t)}\, .
  \end{align*}
  We bound $\sup_{j\in \J, t\in [T^0_m, T^1_m]} \abs{x^j(t) - \Bar{x}^j(t)}$ for $t\in  [T^0_m, T^1_m]$, $j \in \J$, as:
  \begin{align} \label{eq bound xj}
    \abs{x^j(t) - \Bar{x}^j(t)} &= \Phi^j(t) \AAbs{\frac{b_1}{b_2} - w^j(T^0_1)}  \leq \alpha  \max\left\{\frac{b_1}{b_2}, \norm{f}_{\infty} \Tsel\right\} \mathrm{e}^{-b_2 a_0 \Tr }\, ,
  \end{align}
  where the inequality holds because of Assumption \ref{assump delta} and the proof of Proposition \ref{prop bound wj}. Therefore, we get
  \begin{align*}
    \Abs{x^k(T^1_m) - a_1\int_{T^0_m}^{T^1_m} \sum_{j\in \J} \Bar{w}^{j\to k} \Bar{x}^j(t) \, \mathrm{d}t} \leq E(\Tl,\Tr) \, ,
  \end{align*}
  where
  \begin{align*}
    E(\Tl,\Tr) :=  A\mathrm{e}^{-a_2\Tr} &+a_1 \Tl s_0 \abs{\J} E(\Tr) \\
    &+ a_1 \Tl \alpha  \max\left\{\frac{b_1}{b_2}, \norm{f}_{\infty} \Tsel\right\} \mathrm{e}^{-b_2 a_0 \Tr }.
  \end{align*}
  Now that we have bounded the quantity $ \Abs{x^k(T^1_m) - a_1\int_{T^0_m}^{T^1_m} \sum_{j\in \J} \Bar{w}^{j\to k} \Bar{x}^j(t) \,\mathrm{d}t}$, we look at the species discrepancy of species $k$. According to its definition and the previous bound, we get
  \begin{align*}
    \AAbs{\disc^k - \disc^k(\Bar{w}^k)} &\leq 2\max_{m\in [M]}\AAbs{x^k(T^1_m)  - a_1\int_{T^0_m}^{T^1_m} \sum_{j\in \J} \Bar{w}^{j\to k}_m \Bar{x}^j(t) \, \mathrm{d}t} \\
    &\leq 2E(\Tl,\Tr) \, .
  \end{align*}

  To finish the proof, we use the regret bound of the expert aggregation EWA provided in Appendix \ref{appendix EWA}. For $k\in K$, the gains $g^{j\to k}_m$ belong to the interval $[0, \, 2\Tl s_0 \xi^{-1}]$. Therefore, the choice $\eta = \frac{\xi}{2\Tl s_0} \sqrt{8\frac{\ln(\abs{\J})}{M}}$ enables us to use Eq.~\eqref{eq regret EWA}. Let $E_{\text{reg}}(M) := 2\Tl s_0 \xi^{-1} \sqrt{\frac{M}{2}\ln(\abs{\J})}$. The regret of species $k$ using EWA, denoted $R^k_M$, is
  \[
    R^k_M := \max_{p \in \mathcal{P}_{\!\J}} \sum_{m=1}^M \sum_{j\in \J} p^{j\to k} g^{j\to k}_m -  \sum_{m=1}^M \sum_{j\in \J} q_m^{j\to k} g^{j\to k}_m \, ,
  \]
  where $q^{j\to k}_m = \frac{b_2}{b_1} \Bar{w}^{j\to k}_m$. According to Eq.~\eqref{eq regret EWA}, it satisfies
  \begin{equation} \label{eq reg k proof}
    R^k_M \leq E_{\text{reg}}(M).
  \end{equation}
  For $j\in \J$, $k\in K$ let
  \begin{equation} \label{eq def g bar}
    \Bar{g}^{j\to k}_m :=
    \begin{cases}
      \int_{T^0_m}^{T^1_m}(\Bar{x}^j(u) + s_0)\, \mathrm{d}u \times \frac{M}{M^k} &\text{if } o_m\in k  \\
      \int_{T^0_m}^{T^1_m}(s_0 - \Bar{x}^j(u))\, \mathrm{d}u\!\times\! \frac{M}{M^{k'}} \!\times\! \frac{1}{\abs{K}-1} &\text{if } o_m \in k'\neq k
    \end{cases}
  \end{equation}
  and
  \[
    \Bar{R}^k_M := \max_{p \in \mathcal{P}_{\!\J}} \sum_{m=1}^M \sum_{j\in \J} p^{j\to k} \Bar{g}^{j\to k}_m -  \sum_{m=1}^M \sum_{j\in \J} q_m^{j\to k} \Bar{g}^{j\to k}_m .
  \]
  For $j\in \J$, let us bound $ \AAbs{\Bar{g}^{j\to k}_m -  g^{j\to k}_m}$ in order to bound $\Bar{R}^k_M$. According to the definitions of $g^{j\to k}_m$, $\Bar{g}^{j\to k}_m$, Assumption \ref{assump xi}, and Eq.~\eqref{eq bound xj}, we have
  \begin{align*}
    \AAbs{\Bar{g}^{j\to k}_m -  g^{j\to k}_m} \leq \Tl \xi^{-1} \alpha  \max\left\{\frac{b_1}{b_2},\, \norm{f}_{\infty} \Tsel\right\} \mathrm{e}^{-b_2 a_0 \Tr }.
  \end{align*}
  Therefore
  \begin{align} \label{eq reg bar}
    \Bar{R}^k_M &\leq \AAbs{\Bar{R}^k_M - R^k_M} + R^k_M \leq 2M \Tl \xi^{-1} \alpha  \max\left\{\frac{b_1}{b_2},\, \norm{f}_{\infty} \Tsel\right\} \mathrm{e}^{-b_2 a_0 \Tr} + E_{\text{reg}}(M).
  \end{align}
  Besides, by definition of the species discrepancy and the species discrepancy with constant weights,  we have in fact

  \begin{align*}
    & \frac{1}{M}  \sum_{m=1}^M \sum_{j\in \J} \Bar{w}_m^{j\to k} \Bar{g}^{j\to k}_m  \\
    &= \frac{1}{M^k}\sum_{m,\ o_m\in k}\sum_{j\in \J} \Bar{w}_m^{j\to k}\int_{T^0_m}^{T^1_m} (\Bar{x}^j(u) + s_0)\,\mathrm{d}u \\
    &\hspace{1cm} + \frac{1}{\abs{K}-1}\sum_{k'\neq k} \frac{1}{M^{k'}} \sum_{m, \ o_m\in k'}\sum_{j\in \J}\Bar{w}_m^{j\to k}\int_{T^0_m}^{T^1_m} ( - \Bar{x}^j(u) + s_0)\,\mathrm{d}u \\
    & = \Brac{\sum_{j\in \J} \Bar{w}_m^{j\to k}\int_{T^0_m}^{T^1_m} \Bar{x}^j(u)du }_{m, \ o_m\in k} - \Brac{\sum_{j\in \J} \Bar{w}_m^{j\to k}\int_{T^0_m}^{T^1_m} \Bar{x}^j(u)\,\mathrm{d}u }_{
      \begin{subarray}{l}
        k'\neq k \\ m, \ o_m\in k'
    \end{subarray}}
    \\
    &\hspace{1cm}  + \Brac{\Tl s_0}_{m, \ o_m\in k}+ \Brac{\Tl s_0}_{
      \begin{subarray}{l}
        k'\neq k \\ m, \ o_m\in k'
    \end{subarray}} \\
    &= \disc^k(\Bar{w}^k) + 2 \Tl s_0
  \end{align*}
  and similarly for $q^k \in \frac{b_1}{b_2}\mathcal{P}_{\J}$,
  \[
    \frac{1}{M}  \sum_{m=1}^M \sum_{j\in \J} q^{j\to k} \Bar{g}^{j\to k}_m =  \disc^k(q^k) + 2 \Tl s_0 \, .
  \]
  Multiplying Eq.~\eqref{eq reg bar} by $\frac{b_1}{b_2M}$, we get
  \begin{align*}
    &\max_{q \in \frac{b_1}{b_2}\mathcal{P}_{\!\J}}  \disc^k(q^k) - \disc^k(\Bar{w}^k) \\
    &\leq 2\frac{b_1}{b_2} \Tl \xi^{-1} \alpha  \max\left\{\frac{b_1}{b_2}, \,\norm{f}_{\infty} \Tsel\right\} \mathrm{e}^{-b_2 a_0 \Tr}  + \frac{b_1}{b_2M} E_{\text{reg}}(M)\, .
  \end{align*}
  Finally, we get
  \[
    \max_{q^k \in \frac{b_1}{b_2}\mathcal{P}_{\!\J}}  \disc^k(q^k) - \disc^k \leq E(\Tl,\Tr,M)\, ,
  \]
  where
  \begin{align*}
    E(\Tl,\Tr,M) := 2\frac{b_1}{b_2} &\Tl \xi^{-1} \alpha  \max\left\{\frac{b_1}{b_2},\, \norm{f}_{\infty} \Tsel\right\} \mathrm{e}^{-b_2 a_0 \Tr}  \\
    & + \frac{b_1}{b_2M} E_{\text{reg}}(M) + 2E(\Tl,\Tr).
  \end{align*}
  We get the final result by writing only the dependency in $\abs{\J}$, $\Tr$, and $M$.

  \subsubsection{Proof of Theorem \ref{th oracle}}\label{sec:proof th oracle}

  We start by noticing that
  \begin{align*}
    \Disc &=  \Brac{x^{k^\star}(T^1_m) - x^{k}(T^1_m)}_{
      \begin{subarray}{l}
        k^\star \in K \\
        k \neq k^\star \\
        m, \ o_m \in k^\star
    \end{subarray}}\\
    & = \Brac{x^{k^\star}(T^1_m)}_{
      \begin{subarray}{l}
        k^\star \in K \\
        m, \ o_m \in k^\star
    \end{subarray}} - \Brac{x^{k}(T^1_m)}_{
      \begin{subarray}{l}
        k^\star \in K \\
        k \neq k^\star \\
        m, \ o_m \in k^\star
    \end{subarray}}\, .
  \end{align*}
  Let us exchange the names of the indices $k^\star$ and $k$ in the second term, as well as the sums over $k$ and over $k^\star$:
  \begin{align*}
    \Disc & = \Brac{x^{k^\star}(T^1_m)}_{
      \begin{subarray}{l}
        k^\star \in K \\
        m, \ o_m \in k^\star
    \end{subarray}} - \Brac{x^{k^\star}(T^1_m)}_{
      \begin{subarray}{l}
        k^\star \in K \\
        k \neq k^\star \\
        m, \ o_m \in k
    \end{subarray}}\\
    &= \Brac{\Brac{x^{k^\star}(T^1_m)}_{
        \begin{subarray}{l}
          m, \ o_m \in k^\star
      \end{subarray}} - \Brac{x^{k^\star}(T^1_m)}_{
        \begin{subarray}{l}
          k \neq k^\star \\
          m, \ o_m \in k
    \end{subarray}}}_{k^\star \in K} \\
    & = \Brac{\disc^k }_{k^\star\in K}.
  \end{align*}
  Let $q^K \in (\frac{b_1}{b_2}\mathcal{P}_{\J})^{\abs{K}}$. According to the proof of Proposition \ref{prop reg},
  \begin{align*}
    \Disc \geq  \Brac{ \disc^k(q^k)  }_{k\in K} - E(\Tl,\Tr,M)\, ,
  \end{align*}
  where $E(\Tl,\Tr,M)$ is given in the proof of Proposition \ref{prop reg}. With the same computation as before, we get $\Brac{ \disc^k(q^k)  }_{k\in K}  = \Disc(q^K)$. Therefore
  \begin{align*}
    \Disc \geq  \Disc(q^K) - E(\Tl,\Tr,M)\, .
  \end{align*}
  This holds for every $q^K$, therefore proving the result.

  \subsubsection{Proof of Theorem \ref{theo lim wk}}\label{sec:proof theo lim wk}

  For $k\in K$ and $j\in \J$, let $\bbw^{j\to k}_{M+1} := \frac{b_1}{b_2} \frac{\exp(\eta \Bar{G}^{j\to k}_M)}{\sum_{j'\in \J} \exp(\Bar{G}^{j'\to k}_M)}$, where for $j'\in \J$, we define $\Bar{G}^{j'\to k}_M := \sum_{m=1}^M \Bar{g}^{j\to k}_m$, were $\Bar{g}^{j\to k}_m$ is given by Eq.~\eqref{eq def g bar}. For $k\in K$, a family $(a^{j\to k})_{j\in \J}$ is denoted $a^k$. Then we have
  \begin{align*}
    \Norm{w^{k}(T^2_M) - \bw^{k}}_2 &\leq \Norm{w^{k}(T^2_M) - \bw^{k}_{M+1}}_2 +  \Norm{\bw^{k}_{M+1} - \bbw^{k}_{M+1}}_2 + \Norm{\bbw^{k}_{M+1} - \bw^{k}}_2.
  \end{align*}
  We already bounded $\norm{w^{k}(T^2_M) - \bw^{k}_{M+1}}_2$ in Proposition \ref{prop cvg EWA}. Let us bound the two remaining terms.
  \newline

  \begin{itemize}
    \item Bound of $\Norm{\bw^{k}_{M+1} - \bbw^{k}_{M+1}}_2$: Let $j\in \J$. Let us start by bounding $\AAbs{\Bar{G}^{j\to k}_M - G^{j\to k}_M}$. According to the definition of the gains $g^{j\to k}_m$, since the concentration of the species $W^j$ does not evolve during the learning phase, we get
      \begin{align*}
        G^{j\to k}_M  = w^j(T^1_0) &\left(\frac{M}{M^k} \sum_{m, \ o_m \in k} \int_{T^0_m}^{T^1_m} \Phi^j(u) \,\mathrm{d}u - \frac{1}{\abs{K} - 1}\sum_{k'\neq k} \frac{M}{M^{k'}} \sum_{m, \ o_m\in k'} \int_{T^0_m}^{T^1_m}\Phi^j(u) \,\mathrm{d}u\right) \\
        &+ 2M s_0 \Tl \, .
      \end{align*}
      Furthermore, for $k'\in K$, according to Assumption \ref{assump natures and nb obj}, we have $M^{k'} = \frac{M\abs{k}}{\abs{\Obj}}$ and
      \begin{align*}
        \sum_{m, \ o_m\in k'} \int_{T^0_m}^{T^1_m}\Phi^j(u) \,\mathrm{d}u &= \sum_{o\in k'} \sum_{m, \ o_m = o} \int_o\Phi^j = \sum_{o\in k} \frac{M}{\abs{\Obj}} \int_o \Phi^j.
      \end{align*}
      Thus, according to the definition of the flux discrepancy (Definition \ref{def feat disc}), we get
      \begin{align*}
        G^{j\to k}_M
        & =  w^j(T^1_0) M \FDisc^{j\to k} + 2M s_0 \Tl\, .
      \end{align*}
      With the same computation, according to the definition of the gains $\Bar{g}^{j\to k}_m$, we have
      \begin{align} \label{eq bG}
        \Bar{G}^{j\to k}_M & =  \frac{b_1}{b_2} M \FDisc^{j\to k} + 2M s_0 \Tl \, .
      \end{align}
      Hence, according to the proof of Proposition \ref{prop bound wj}, we get
      \begin{align*}
        \AAbs{\Bar{G}^{j\to k}_M - G^{j\to k}_M} & = \AAbs{w^j(T^0_1) - \frac{b_1}{b_2}} \FDisc^{j_S\to k}M \\
        &\leq 2\alpha \Tl \max\left\{\frac{b_1}{b_2},\, \norm{f}_{\infty} \Tsel\right\} M \mathrm{e}^{-b_2 a_0 \Tr },
      \end{align*}
      where we bounded $\FDisc^{j\to k}$ by $2\alpha \Tl$. We can combine this bound with Proposition \ref{prop iaf} to get
      \begin{align*}
        \Norm{\bw^{k}_{M+1} - \bbw^{k}_{M+1}}_2 &\leq \eta \norm{\Bar{G}^k_M - G^k_M}_2 \\
        &\leq \eta \sqrt{\abs{\J}} 2\alpha \Tl \max\left\{\frac{b_1}{b_2}, \, \norm{f}_{\infty} \Tsel\right\} M \mathrm{e}^{-b_2 a_0 \Tr }\\
        & = \xi \alpha s_0^{-1}  \max\left\{\frac{b_1}{b_2},\, \norm{f}_{\infty} \Tsel\right\}\sqrt{8 \abs{\J}\ln(\abs{\J})M} \mathrm{e}^{-b_2 a_0 \Tr } .
      \end{align*}

    \item Bound of $\Norm{\bbw^{k}_{M+1} - \bw^k}_2$: According to Eq.~\eqref{eq bG}, for $j\in \J$, $k\in K$ we have
      \begin{align*}
        \eta \Bar{G}^{j\to k}_M &= \xi \left( 1 + \frac{b_1}{2b_2 s_0 \Tl} \FDisc^{j\to k} \right) \sqrt{8\ln(\abs{\J}) M}\, .
      \end{align*}
      We apply Proposition \ref{prop prelim EWA} and distinguish two cases:
      \begin{enumerate}
        \item If $\J^k = \J$ then $\bbw^k_{M+1} = \bw^k$.
        \item If $\J^k \subsetneq \J$ then let $\Delta^k := \max_{j\in \J} \FDisc^{j\to k} - \max_{j\notin \J^k} \FDisc^{j\to k}$. We then have
          \begin{align*}
            &\Norm{\bbw^{k}_{M+1} -  \bw^k}_2 \\
            &\leq \frac{b_2}{b_1}\frac{\abs{\J}^{1/2}}{\abs{\J^k}}\max\left\{1,\frac{\abs{\J}- \abs{\J^k}}{\abs{\J^k}}\right\} \exp\left(-\frac{\xi b_1 }{2b_2 s_0 \Tl} \Delta^k \sqrt{8\ln(\abs{\J}) M}\right).
          \end{align*}
      \end{enumerate}
  \end{itemize}

  We get the result by adding all error terms.

  \subsubsection{Proof of Theorem \ref{th equiv}}\label{sec:proof th equiv}

  We assume that Assumption \ref{assump bin flux} holds.

  \begin{itemize}
    \item Proof of $(i)\implies (iii)$: Suppose Assumption \ref{assump class dec} (class decomposition) holds. Let us compute the flux discrepancies of the species in $\J$ in order to compute the weights $\bw^K$ given in Theorem \ref{theo lim wk}. For every $k\in K$, we choose the set $E^k$ given by Assumption \ref{assump xi} with maximal size. Let $j\in \J$. We recall that
      \begin{align*}
        \FDisc^{j\to k} &= \Brac{\int_o\Phi^j}_{o\in k} - \Brac{\int_o \Phi^j}_{
          \begin{subarray}{l}
            k'\neq k \\
            o\in k'
        \end{subarray}} .
      \end{align*}
      According to Assumption \ref{assump bin flux} (binary correlations), $\int_o \Phi^j = p$ if and only if $o\in \Obj^j$, and $\int_o \Phi^j = 0$ otherwise.

      Let $j\in \J$ and $o\in k$. We have $\Brac{\int_o\Phi^j}_{o\in k} =  \frac{Z}{\abs{k}} p$, where $Z$ is the common value of all $\abs{\Obj^{j'}}$ for $j'\in \J$. Let $o\in k'\neq k$. If $\int_o \Phi^j>0 $, it would mean that $o$ has features $j$, so $o$ would belong to $\Obj^j$. Since $j\in E^k$, this would mean that $o$ would belong to class $k$, which is impossible. Therefore $\int_o \Phi^j = 0 $ for every $o\notin k$, $\Brac{\int_o \Phi^j}_{
        \begin{subarray}{l}
          k'\neq k \\
          o\in k'
      \end{subarray}} = 0$, and $\FDisc^{j\to k} = \frac{Z}{\abs{k}} p$.

      Let $j\in \J$ with $j\notin E^k$. This means that there exists $o\in \Obj^j$ such that $o\notin k$. Then, $\Brac{\int_o \Phi^j}_{o\in k} \leq  \frac{Z-1}{\abs{k}} p$ and consequently $\FDisc^{j\to k} \leq \frac{Z-1}{\abs{k}} p$.
      Hence, $\J^k = E^k$. So, according to Theorem \ref{theo lim wk}, the limit weights are $\bw^{j\to k} = \frac{1}{\abs{E^k}} \mathbb{1}_{j\in E^k}$.

      Now that we computed the limit weight family, we can verify that it is an optimal weight family. Let $k\in K$ and $o\in \Obj$.
      \begin{align*}
        \sum_{j\in \J} \bw^{j\to k} \int_o \Phi^j &=  \sum_{j\in \J} \frac{1}{\abs{E^k}} \mathbb{1}_{j\in E^k} \times p \mathbb{1}_{o\in k} = \frac{n^k_o p}{\abs{E^k}}\, ,
      \end{align*}
      where $n^k_o$ is the number of sets $j\in E^k$ such that $o$ has features $j$. If $o\in k$, according to Assumption \ref{assump class dec} (class decomposition), we have $n^k_o>0$ and therefore $ \sum_{j\in \J} \bw^{j\to k} \int_o \Phi^j>0$. If $o\notin k$, then $n^k_o = 0$ (otherwise $o$ would belong to a set $\Obj^j$ in the composition of class $k$, so $o$ would belong to $k$), and therefore  $\sum_{j\in \J} \bw^{j\to k} \int_o \Phi^j=0$. Hence, $\bw^K$ is an optimal weight family.
      \newline

    \item Proof of $(ii)\implies (i)$: Suppose there exists an optimal weight family $q^K$. Let $k\in K$ and $E^k := \{j\in \J \text{ such that }  q^{j\to k}>0\}$. Let us show that $k = \bigcup_{j\in E^k} \Obj^j$. Let $o\in k$. Since $q^K$ is an optimal weight family, we have $\sum_{j\in \J} q^{j\to k} \int_o \Phi^j >0$. Therefore, there exists $j \in \J$ such that $q^{j\to k}>0$ and $\int_o \Phi^j >0$. Hence, $j\in E^k$ and $o$ has features $j$ so $o\in \bigcup_{j\in E^k} \Obj^j$. Consequently, $k \subset \bigcup_{j\in E^k} \Obj^j$.

      Let $o\in \bigcup_{j\in E^k} \Obj^j$. There exists $j\in \J$ such that $o$ has features $j$ and $q^{j\to k}>0$. Then, $\sum_{j\in \J} q^{j\to k} \int_o \Phi^j >0$. Since $q^K$ is an optimal weight family, this implies that $o\in k$. Therefore, $\bigcup_{j\in E^k} \Obj^j \subset k$.
  \end{itemize}

  It is clear that $(iii)\implies (ii)$, so the three statements are equivalent.

  \subsubsection{Proof of Proposition \ref{prop vcdim}}\label{sec:proof prop vcdim}

  The VC-dimension is the size of the largest set $E$ than can be fully shattered by $\h$, \ie $\h$ can realize all possible dichotomies of $E$. Hence, to prove that $\operatorname{VCdim}(\h) = \abs{\J}$, we start by considering a set $E$ of size $\abs{\J}$ than can be fully shattered by $\h$, implying that $\operatorname{VCdim}(\h) \geq \abs{\J}$. Then, we prove that any set that can be fully shattered by $\h$ has a size less than or equal to $\abs{\J}$, leaving only the case $\operatorname{VCdim}(\h) = \abs{\J}$.
  \newline

  \begin{enumerate}
    \item Let $j\in \J$ and $o^j := (o^j_i)_{i\in I}$, where $o^i_j = 1$ if and only if $i\in j$, such that there is a one-to-one correspondence between sample $o^j$ and the set $j$. Let $E:= \{o^j, j\in \J\}$. Then $\abs{E}=\abs{\J}$. To show that this set can be fully shattered by $\h$, consider $E'\subset E$ and $F:= \{j\in \J,\, \exists o\in \E', \, o = o^j\}$. Then, $E'$ and $E\setminus E'$ are separated by $\mathbb{1}_F$. Hence, $\operatorname{VCdim}(\h) \geq \abs{\J}$.
      \newline

    \item Let $E$ be a set of sample types that can be fully shattered by $\h$. In particular, for each $o\in E$, there exists a non-empty $F\subset \J$ such that $o$ is the unique item of $E$ that activates a set $j\in F$. Let $j_o$ be such a species. Then $o\mapsto j_o$ is an injection, so $E\leq \abs{\J}$. Therefore, $\operatorname{VCdim}(\h) \leq \abs{\J}$.
  \end{enumerate}

  \vskip 0.2in

  \bibliography{biblio}

  \end{document}